\def\figref#1{figure~\ref{#1}}
\def\secref#1{section~\ref{#1}}
\def\Secref#1{Section~\ref{#1}}
\def\1{\bm{1}}
\DeclareMathAlphabet{\mathsfit}{\encodingdefault}{\sfdefault}{m}{sl}
\SetMathAlphabet{\mathsfit}{bold}{\encodingdefault}{\sfdefault}{bx}{n}
\newcommand{\R}{\mathbb{R}}
\DeclareMathOperator*{\argmin}{arg\,min}
\definecolor{ourspecialtextcolor}{rgb}{0.528, 0.471, 0.701} % also used for comments in code
\setlist[enumerate]{label={\rm(\roman*)},leftmargin=6ex}
\algrenewcommand{\algorithmiccomment}[1]{\bgroup\hfill//~#1\egroup}
\algrenewcommand{\Return}{\State\textbf{return}\ }
\algnewcommand{\Save}{\State\textbf{save}\ }
\algnewcommand{\Load}{\State\textbf{load}\ }
\renewcommand{\figref}[1]{Fig.~\ref{#1}}
\newcommand{\tbref}[1]{Tab.~\ref{#1}}
\newlength{\picHeight}
\newcommand{\WW}{\mathcal W}
\renewcommand{\d}{\mathrm{d}}
\newcommand{\cc}{\mathbf{c}}
\newcommand{\il}{\frac{1}{\lambda}}
\newcommand{\dccdw}{\frac{\d\cc}{\d w}}
\newcommand{\pdiff}[2]{\frac{\d #1}{\d #2}}
\newcommand{\std}[1]{ \pm #1}
\DeclareMathOperator{\dist}{dist}
\newcommand\Wd[1][\lambda]{W^{#1}_{\text{dif}}}
\newcommand\We[1][\lambda]{W^{#1}_{\text{eq}}}
\newcommand{\wy}{\bigl(w,y(w)\bigr)}
\newcommand{\wyl}{\bigl(w,y_\lambda(w)\bigr)}
\newcommand{\yw}{\bigl(y(w)\bigr)}
\newcommand{\ylw}{\bigl(y_\lambda(w)\bigr)}
\theoremstyle{plain}
\newtheorem{theorem}{Theorem}
\newtheorem{prop}{Proposition}
\newtheorem{example}{Example}
\newtheorem{remark}{Remark}
\newtheorem{observation}{Observation}
\newtheoremstyle{MY}{}{}{\upshape}{}{\bfseries}{.}{5pt plus 4pt minus 3pt}{\thmname{#1}\thmnumber{ A#2#3}}
\theoremstyle{MY}
\newtheorem{property}{Property}
\let\expandafter\oldproof\csname\string\proof\endcsname
\let\oldendproof\endproof
\renewenvironment{proof}[1][\proofname]{%
  \oldproof[{{\bf #1.}}]%
}{\oldendproof}
\DeclareRobustCommand\onedot{\futurelet\@let@token\@onedot}
\def\@onedot{\ifx\@let@token.\else.\null\fi\xspace}
\def\eg{{e.g}\onedot}  % no italic for those terms
\def\ie{{i.e}\onedot}
\newcommand\email[1]{{\sl #1}}
\title{Differentiation of Blackbox Combinatorial Solvers}
\author{%
	Marin Vlastelica$^1$%Pogan\v{c}i\'c$^1$
	\thanks{These authors contributed equally.}\ \ ,
	Anselm Paulus$^{1*}$,
	V\'\i t Musil$^2$,
	Georg Martius$^1$,
	Michal Rol\'inek$^1$
	\\[3pt]
	$^1$ Max-Planck-Institute for Intelligent Systems,
	T\"ubingen, Germany\\
	$^2$ Universit\`a degli Studi di Firenze, Italy
	\\[3pt]
	\email{\{marin.vlastelica, anselm.paulus, georg.martius, michal.rolinek\}@tuebingen.mpg.de}\\
	\email{vit.musil@unifi.it}\\
	\And
}
\begin{document}

% Todo:
% footnote with code
% links to supplementary
%
\maketitle

\begin{abstract}
Achieving fusion of deep learning with combinatorial algorithms promises
transformative changes to artificial intelligence.  One possible approach is to
introduce combinatorial building blocks into neural networks. Such end-to-end
architectures have the potential to tackle combinatorial problems on raw input
data such as ensuring global consistency in multi-object tracking or route
planning on maps in robotics.  In this work, we present a method that
implements an efficient backward pass through blackbox implementations of
combinatorial solvers with linear objective functions.  We provide both
theoretical and experimental backing. In particular, we incorporate the Gurobi
MIP solver, Blossom V algorithm, and Dijkstra's algorithm into architectures
that extract suitable features from raw inputs for the traveling salesman
problem, the min-cost perfect matching problem and the shortest path problem.
The code is available at
\begin{center}
	\url{https://github.com/martius-lab/blackbox-backprop}.
\end{center}
\end{abstract}

\section{Introduction}

The toolbox of popular methods in computer science currently sees a split into
two major components. On the one hand, there are classical algorithmic
techniques from discrete optimization -- graph algorithms, SAT-solvers, integer
programming solvers -- often with heavily optimized implementations and
theoretical guarantees on runtime and performance. On the other hand, there is
the realm of deep learning allowing data-driven feature extraction as well as
the flexible design of end-to-end architectures. The fusion of deep learning
with combinatorial optimization is desirable both for foundational reasons --
extending the reach of deep learning to data with large combinatorial
complexity -- and in practical applications. These often occur for example in
computer vision problems that require solving a combinatorial sub-task on top
of features extracted from raw input such as establishing global consistency in
multi-object tracking from a sequence of frames.

The fundamental problem with constructing hybrid architectures is
differentiability of the combinatorial components. State-of-the-art approaches
pursue the following paradigm: introduce suitable approximations or
modifications of the objective function or of a baseline algorithm that
eventually yield a differentiable computation. The resulting algorithms are
often sub-optimal in terms of runtime, performance and optimality guarantees
when compared to their \emph{unmodified} counterparts. While the sources of
sub-optimality vary from example to example, there is a common theme: any
differentiable algorithm in particular outputs continuous values and as such it
solves a \emph{relaxation} of the original problem. It is well-known in
combinatorial optimization theory that even strong and practical convex
relaxations induce lower bounds on the approximation ratio for large classes of
problems \citep{Raghavendra:2008:OAI:1374376.1374414,
Thapper:2017:LSR:3329995.3330022} which makes them inherently sub-optimal.
This inability to incorporate the best implementations of the best algorithms is unsatisfactory.

\begin{figure}[h]
	\centering
	\includegraphics[width=0.95\linewidth]{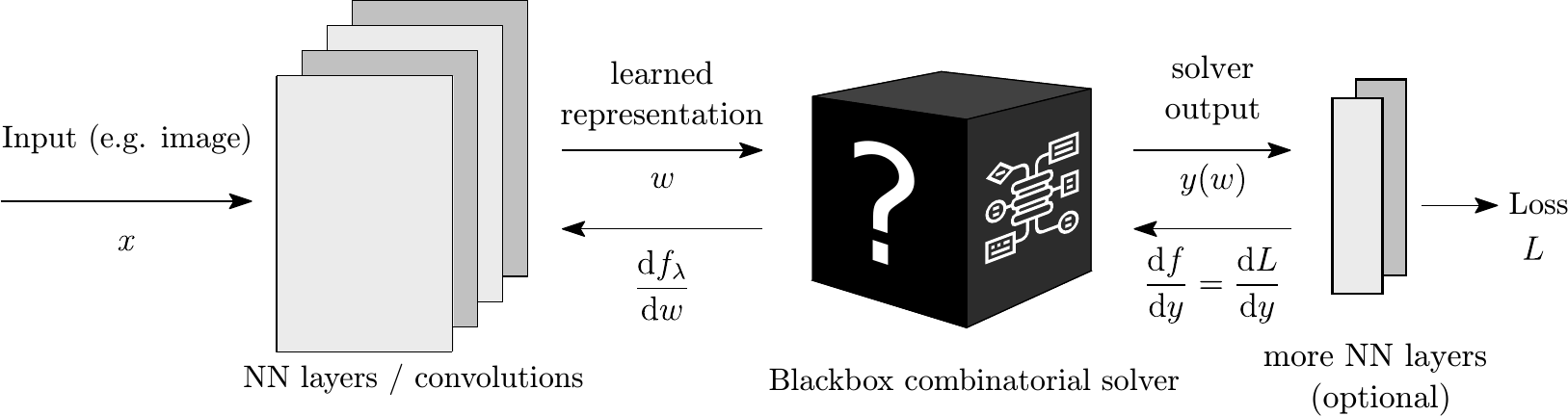}
	\caption{Architecture design enabled by Theorem \ref{T:f-lambda}. Blackbox
	combinatorial solver embedded into a neural network.}
	\label{fig:architecture}
\end{figure}

In this paper, we propose a method that, at the cost of one hyperparameter,
implements a backward pass for a {\bf blackbox implementation} of a
combinatorial algorithm or a solver that optimizes a linear objective function.
This effectively turns the algorithm or solver into a composable building block
of neural network architectures, as illustrated in \figref{fig:architecture}.
Suitable problems with linear objective include classical problems such as {\sc
shortest-path}, {\sc traveling-salesman} (TSP), {\sc
min-cost-perfect-matching}, various cut problems as well as entire frameworks
such as integer programs (IP), Markov random fields (MRF) and conditional
random fields (CRF).

The main technical challenge boils down to providing an informative gradient of
a piecewise constant function. To that end, we are able to heavily leverage the
minimization structure of the underlying combinatorial problem and efficiently
compute a gradient of a continuous interpolation. While the roots of the method
lie in loss-augmented inference, the employed mathematical technique for
continuous interpolation is novel. The computational cost of the introduced
{\bf backward pass matches the cost of the forward pass}. In particular, it
also amounts to one call to the solver.

In experiments, we train architectures that contain {\bf unmodified
implementations} of the following efficient combinatorial algorithms:
general-purpose mixed-integer programming solver Gurobi \citep{gurobi},
state-of-the-art C implementation of {\sc min-cost-perfect-matching} algorithm
-- Blossom V \citep{Kolmogorov2009} and Dijkstra's algorithm \citep{Dijkstra}
for {\sc shortest-path}.  We demonstrate that the resulting architectures train
without sophisticated tweaks and are able to solve tasks that are beyond the
capabilities of conventional neural networks.

\section{Related Work} \label{sec:related-work}

Multiple lines of work lie at the intersection of combinatorial algorithms and
deep learning. We primarily distinguish them by their motivation.

\paragraph{Motivated by applied problems.} Even though computer vision has seen
a substantial shift from combinatorial methods to deep learning, some problems
still have a strong combinatorial aspect and require hybrid approaches.
Examples include multi-object tracking \citep{Schulter2017DeepNF}, semantic
segmentation \citep{7913730}, multi-person pose estimation
\citep{deepcut16cvpr, Jie:graph:decomposition:2018}, stereo matching
\citep{Knbelreiter2016EndtoEndTO} and person re-identification
\citep{Ye_2017_ICCV}. The combinatorial algorithms in question are typically
Markov random fields (MRF) \citep{Chen:2015:LDS:3045118.3045308}, conditional
random fields (CRF) \citep{Marin_2019_CVPR}, graph matching
\citep{Ye_2017_ICCV} or integer programming \citep{Schulter2017DeepNF}. In
recent years, a plethora of hybrid end-to-end architectures have been proposed.
The techniques used for constructing the backward pass range from employing
various relaxations and approximations of the combinatorial problem
\citep{Chen:2015:LDS:3045118.3045308, Zheng:2015:CRF:2919332.2919659} over
differentiating a fixed number of iterations of an iterative solver
\citep{Paschalidou2018CVPR, Tompson:2014:JTC:2968826.2969027,
Liu:2015:SIS:2919332.2920082} all the way to relying on the structured SVM
framework \citep{Tsochantaridis:2005:LMM:1046920.1088722,
Chen:2015:LDS:3045118.3045308}.

\paragraph{Motivated by ``bridging the gap''.} Building links between
combinatorics and deep learning can also be viewed as a foundational problem;
for example, \citep{47094} advocate that ``combinatorial generalization must be
a top priority for AI''. One such line of work focuses on designing
architectures with algorithmic structural prior -- for example by mimicking the
layout of a Turing machine \citep{memory-networks-e2e, pointer-networks,
neural-turing-machine, graves2016hybrid} or by promoting behaviour that
resembles message-passing algorithms as it is the case in Graph Neural Networks
and related architectures \citep{Scarselli, li2016gated, 47094}. Another
approach is to provide neural network building blocks that are specialized to
solve some types of combinatorial problems such as satisfiability (SAT)
instances \citep{wang2019satnet},  mixed integer programs \citep{miplayer},
sparse inference \citep{pmlr-v80-niculae18a}, or submodular maximization
\citep{tschiatschek18differentiable}. A related mindset of learning inputs to
an optimization problem gave rise to the ``predict-and-optimize'' framework and
its variants \citep{predict-optimize, predict-optimize-ranking,
predict-and-optimize-comb}. Some works have directly addressed the question of
learning combinatorial optimization algorithms such as the {\sc
traveling-salesman-problem} in \citep{neural-comb-with-rl} or its vehicle
routing variants \citep{NIPS2018_8190}. A recent approach also learns
combinatorial algorithms via a clustering proxy \citep{NIPS2019_8715}.

There are also efforts to bridge the gap in the opposite direction; to use deep
learning methods to improve state-of-the-art combinatorial solvers, typically
by learning (otherwise hand-crafted) heuristics. Some works have again targeted
the {\sc traveling-salesman-problem} \citep{kool2018attention,
tsp-policy-gradient, neural-comb-with-rl} as well as other NP-Hard problems
\citep{np-hard-learned-solvers}. Also, more general solvers received some
attention; this includes SAT-solvers \citep{guiding-sat-solver,
selsam2018learning}, integer programming solvers (often with learning
branch-and-bound rules) \citep{branch-and-bound, learn-branch,
DBLP:journals/corr/abs-1906-01629} and SMT-solvers (satisfiability modulo
theories)\citep{smt-solver}.

\section{Method}

Let us first formalize the notion of a combinatorial solver. We expect the
solver to receive continuous input $w \in W\subseteq \R^N$ (\eg edge weights of
a fixed graph) and return discrete output $y$ from some finite set $Y$ (\eg all
traveling salesman tours on a fixed graph) that minimizes some cost $\cc(w,y)$
(\eg length of the tour). More precisely, the solver maps
\begin{equation} \label{E:solver}
       w \mapsto y(w)
              \quad \text{such that} \quad
       y(w) = \argmin_{y\in Y} \cc(w,y).
\end{equation}

We will restrict ourselves to objective functions $\cc(w,y)$ that are
\textbf{linear} , namely $\cc(w,y)$ may be represented as
\begin{equation} \label{E:c-linear}
       \cc(w,y) = w\cdot \phi(y)
       \quad\text{for $w\in W$ and $y\in Y$}
\end{equation}
in which $\phi\colon Y\to \R^N$ is an injective representation of $y\in Y$ in
$\R^N$.  For brevity, we omit the mapping $\phi$ and instead treat elements of
$Y$ as discrete points in $\R^N$.

Note that such definition of a solver is still very general as there are
\textbf{no assumptions on the set of constraints or on the structure of the
output space $Y$.}

\begin{example}[Encoding shortest-path problem] \label{ex:shortest-path}
If $G = (V, E)$ is a given graph with vertices $s, t \in V$, the combinatorial
solver for the $(s,t)$-{\sc shortest-path} would take edge weights $w \in W =
\R^{|E|}$ as input and produce the shortest path $y(w)$ represented as $\phi(y)
\subseteq \{0, 1\}^{|E|}$ an indicator vector of the selected edges. The cost
function is then indeed the inner product $\cc(w,y) = w \cdot \phi(y)$.
\end{example}

The task to solve during back-propagation is the following. We receive the
gradient $\d L/\d y$ of the global loss $L$ with respect to solver output $y$
at a given point $\hat y=y(\hat w)$. We are expected to return $\d L/\d w$, the
gradient of the loss with respect to solver input $w$ at a point $\hat w$.

Since $Y$ is finite, there are only finitely many values of $y(w)$. In other
words, this function of $w$ is \textbf{piecewise constant} and the gradient is
identically zero or does not exist (at points of jumps). This should not come
as a surprise; if one does a small perturbation to edge weights of a graph, one
\textit{usually} does not change the optimal TSP tour and \textit{on rare
occasions} alters it drastically.  This has an important consequence:

\begin{quote}
The fundamental problem with differentiating through combinatorial solvers is
not the lack of differentiability; the gradient exists \textit{almost
everywhere}. However, this gradient is a constant zero and as such is unhelpful
for optimization.
\end{quote}

Accordingly, we will \textit{not rely} on standard techniques for gradient
estimation (see \citep{mohamed2019monte} for a comprehensive survey).

First, we simplify the situation by considering the linearization $f$ of $L$ at
the point $\hat{y}$. Then for
\begin{equation*}
	f(y) = L(\hat{y}) + \frac{\d L}{\d y}(\hat y) \cdot (y-\hat{y})
	\quad\text{we have}\quad
	\pdiff{f\yw}{w} = \pdiff{L}{w}
\end{equation*}
and therefore it suffices to focus on differentiating the piecewise constant
function $f\yw.$

If the piecewise constant function at hand was arbitrary, we would be forced to
use zero-order gradient estimation techniques such as computing finite
differences.  These require prohibitively many function evaluations
particularly for high-dimensional problems.

However, the function $f\yw$ is a result of a minimization process and it is
known that for smooth spaces $Y$ there are techniques for such
``differentiation through argmin'' \citep{schmidt2014shrinkage,
samuel2009learning,foo2008efficient, domke2012generic, amos2017input,
br2017optnet}. It turns out to be possible to build -- with different
mathematical tools -- a viable discrete analogy. In particular, we can
efficiently {\bf construct a function} $f_\lambda(w)$, {\bf a continuous
interpolation} of $f\yw$, whose gradient we return (see \figref{fig:f-lambda}).
The hyper-parameter $\lambda > 0$ controls the trade-off between
``informativeness of the gradient'' and ``faithfulness to the original
function''.

Before diving into the formalization, we present the final algorithm as listed
in Algo.~\ref{algo:main}. It is simple to implement and the backward pass
indeed only runs the solver once on modified input. Providing the
justification, however, is not straightforward, and it is the subject of the
rest of the section.

\algrenewcommand\algorithmicindent{1em}
\algrenewcommand{\algorithmiccomment}[1]{\bgroup\hskip2em\textcolor{ourspecialtextcolor}{//~\textsl{#1}}\egroup}
\begin{algorithm}
\begin{subalgorithm}[t]{.5\textwidth}
\begin{algorithmic}
       \Function{ForwardPass}{$\hat w$}
       \State
       $\hat y :=$ \textbf{Solver}($\hat w$)  \Comment{$\hat y = y(\hat w)$}
       \Save $\hat w$ and $\hat y$ for backward pass
       \Return $\hat y$
       \EndFunction
\end{algorithmic}
\end{subalgorithm}%
\begin{subalgorithm}[t]{.5\textwidth}
\begin{algorithmic}
       \Function{BackwardPass}{$\tfrac{\d L}{\d y}(\hat y)$, $\lambda$}
       \Load $\hat w$ and $\hat y$ from forward pass
       \State $w' := \hat w + \lambda \cdot \tfrac{\d L}{\d y}(\hat y)$
       \\ \Comment{Calculate perturbed weights}
       \State $y_\lambda :=$ \textbf{Solver}($w'$)
       \Return $\nabla_w f_\lambda(\hat w) := -\il\bigl[ \hat y - y_\lambda\bigr]$
       \\ \Comment{Gradient of continuous interpolation}
       \EndFunction
\end{algorithmic}
\end{subalgorithm}
\caption{Forward and Backward Pass}\label{algo:main}%
\end{algorithm}

\subsection{Construction and Properties of $f_\lambda$}

\begin{figure}
  \centering
  \begin{subfigure}[b]{0.45\linewidth}
    \includegraphics[width=\linewidth]{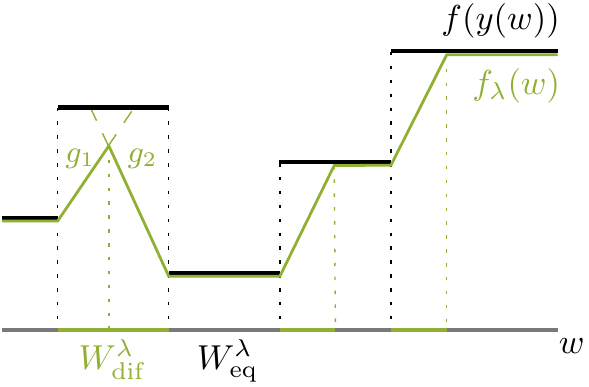}
    \caption{}
    \label{fig:d-lambda-small}
  \end{subfigure}
  \hfil
  \begin{subfigure}[b]{0.45\linewidth}
    \includegraphics[width=\linewidth]{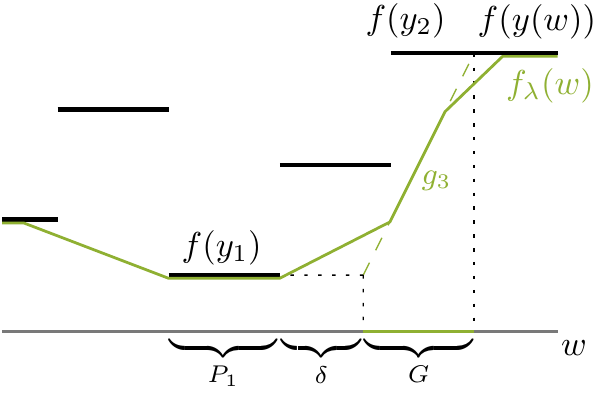}
    \caption{}
    \label{fig:d-lambda-big}
  \end{subfigure}
  \caption{Continuous interpolation of a piecewise constant function.
    (\subref{fig:d-lambda-small}) $f_\lambda$ for a small value of $\lambda$; the set $\We$ is still
    substantial and only two interpolators $g_1$ and $g_2$ are incomplete.
    Also, all interpolators are $0$-interpolators.
    (\subref{fig:d-lambda-big}) $f_\lambda$ for a high value of $\lambda$; most interpolators are
    incomplete and we also encounter a $\delta$-interpolator $g_3$ (between
    $y_1$ and $y_2$) which attains the value $f(y_1)$ $\delta$-away from the set
    $P_1$. %=\{w\in W:y(w)=y_1\}$.
    Despite losing some local structure for high
    $\lambda$, the gradient of $f_\lambda$ is still informative.
  }
  \label{fig:f-lambda}
\end{figure}

Before we give the exact definition of the function $f_\lambda$, we formulate
several requirements on it. This will help us understand why $f_\lambda(w)$ is
a reasonable replacement for $f\yw$ and, most importantly, why its gradient
captures changes in the values of $f$.

\begin{property}
For each $\lambda>0$, $f_\lambda$ is continuous and piecewise affine.
\end{property}

The second property describes the trade-off induced by changing the value of
$\lambda$. For $\lambda>0$, we define sets $\We$ and $\Wd$ as the sets where
$f\yw$ and $f_\lambda(w)$ coincide and where they differ, \ie
\begin{equation*}
       \We = \left\{w\in W: f_\lambda(w)=f\yw \right\}
       \quad\text{and}\quad
       \Wd = W\setminus\We.
\end{equation*}

\begin{property}
The sets $\Wd$ are monotone in $\lambda$ and they vanish as $\lambda\to 0^+$,
\ie
\begin{equation*}
       \Wd[\lambda_1] \subseteq \Wd[\lambda_2]
              \quad\text{for $0<\lambda_1\le \lambda_2$}
       \quad\text{and}\quad
       \Wd\to\emptyset
              \quad\text{as $\lambda\to 0^+$}.
\end{equation*}
\end{property}

In other words, Property A2 tells us that $\lambda$ controls the size of the
set where $f_\lambda$ deviates from $f$ and where $f_\lambda$ has meaningful
gradient. This behaviour of $f_\lambda$ can be seen in \figref{fig:f-lambda}.

In the third and final property, we want to capture the interpolation behavior
of $f_\lambda$. For that purpose, we define a \emph{$\delta$-interpolator} of
$f$.  We say that $g$, defined on a set $G\subset W$, is a
$\delta$-interpolator of $f$ between $y_1$ and $y_2\in Y$, if
\begin{itemize}
\item $g$ is non-constant affine function;
\item the image $g(G)$ is an interval with endpoints $f(y_1)$ and $f(y_2)$;
\item $g$ attains the boundary values $f(y_1)$ and $f(y_2)$ at most
$\delta$-far away from where $f(y(w))$ does. In particular, there is a point
$w_k\in G$ for which $g(w_k)=f(y_k)$ and $\dist(w_k,P_k)\le\delta$, where $P_k
= \{w\in W:y(w)=y_k\}$, for $k=1,2$.
\end{itemize}

In the special case of a 0-interpolator $g$, the graph of $g$ connects (in a
topological sense) two components of the graph of $f\yw$. In the general case,
$\delta$ measures \emph{displacement} of the interpolator (see also
\figref{fig:f-lambda} for some examples). This displacement on the one hand
loosens the connection to $f\yw$ but on the other hand allows for less local
interpolation which might be desirable.

\begin{property}
The function $f_\lambda$ consists of finitely many (possibly incomplete)
$\delta$-interpolators of $f$ on~$\Wd$ where $\delta\le C\lambda$ for some
fixed $C$. Equivalently, the \emph{displacement} is linearly controlled by
$\lambda$.
\end{property}

Intuitively, the consequence of Property A3 is that $f_\lambda$ has reasonable gradients everywhere since it consists of elementary affine interpolators.

For defining the function $f_\lambda$, we need a solution of a perturbed
optimization problem
\begin{equation} \label{E:yl-def}
	y_\lambda(w) = \argmin_{y\in Y} \{\cc(w,y)+\lambda f(y)\}.
\end{equation}

\begin{theorem} \label{T:f-lambda}
Let $\lambda>0$. The function $f_\lambda$ defined by
\begin{equation} \label{E:fl-def}
	f_\lambda(w) = f\ylw - \il\Bigl[ \cc\wy - \cc\wyl\Bigr]
\end{equation}
satisfies Properties A1, A2, A3.
\end{theorem}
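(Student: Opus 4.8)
The plan is to recast $f_\lambda$ as a difference of two value functions and read off the three properties from the elementary geometry of minima of affine functions. Writing $m(w)=\min_{y\in Y}\cc(w,y)$ and $M_\lambda(w)=\min_{y\in Y}\{\cc(w,y)+\lambda f(y)\}$ for the optimal values of the original and perturbed problems, the definition \eqref{E:fl-def} collapses to the identity
\begin{equation*}
	f_\lambda(w)=\il\bigl[M_\lambda(w)-m(w)\bigr],
\end{equation*}
since $m(w)=\cc\wy$ and $M_\lambda(w)=\cc\wyl+\lambda f\ylw$. This is the workhorse of the whole argument. Because $Y$ is finite and each $\cc(w,y)=w\cdot y$ is affine in $w$, both $m$ and $M_\lambda$ are minima of finitely many affine functions, hence continuous, concave and piecewise affine; their scaled difference is therefore continuous and piecewise affine on the common refinement of the two cell decompositions, which settles Property A1. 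On a cell of this refinement, where $y(w)\equiv y_1$ and $y_\lambda(w)\equiv y_2$ are both constant, the identity specialises to $f_\lambda(w)=f(y_2)-\il\,w\cdot(y_1-y_2)$, the explicit affine form I will use repeatedly.

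For Property A2, I would characterise $\We$ exactly. From the identity, $f_\lambda(w)=f\yw$ is equivalent to $M_\lambda(w)=\cc\wy+\lambda f\yw$, i.e.\ to the original optimiser $y(w)$ being \emph{also} optimal for the perturbed problem. Spelling this out, $w\in\We$ iff $\cc(w,y(w))-\cc(w,y)\le\lambda\bigl(f(y)-f(y(w))\bigr)$ for all $y\in Y$. The left-hand side is $\le 0$ by optimality of $y(w)$, so the inequality is automatic whenever $f(y)\ge f(y(w))$ and otherwise rearranges to $\lambda\le\bigl(\cc(w,y(w))-\cc(w,y)\bigr)/\bigl(f(y)-f(y(w))\bigr)$. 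Taking the minimum of these finitely many positive ratios defines a threshold $\lambda^*(w)\in(0,\infty]$ with $w\in\We$ iff $\lambda\le\lambda^*(w)$. Monotonicity $\Wd[\lambda_1]\subseteq\Wd[\lambda_2]$ for $\lambda_1\le\lambda_2$ is then immediate, and since $\lambda^*(w)>0$ at every $w$ whose original optimiser is unique, $\Wd$ shrinks to (at most) the measure-zero set of tie points as $\lambda\to0^+$, giving $\Wd\to\emptyset$.

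The core of the work, and the step I expect to be the main obstacle, is Property A3. On each cell $G$ of $\Wd$ I already have $g:=f_\lambda|_G=f(y_2)-\il\,w\cdot(y_1-y_2)$ with $y_1=y(w)|_G\ne y_2=y_\lambda(w)|_G$, so $g$ is non-constant affine; it remains to pin down the interpolation endpoints and bound the displacement. The plan is to observe that $g$ takes the value $f(y_2)$ exactly on the original tie hyperplane $H_0=\{w:\cc(w,y_1)=\cc(w,y_2)\}$ and the value $f(y_1)$ exactly on the perturbed tie hyperplane $H_\lambda=\{w:\cc(w,y_1)+\lambda f(y_1)=\cc(w,y_2)+\lambda f(y_2)\}$; these are parallel with common normal $y_1-y_2$, so $g(G)$ is the interval between $f(y_1)$ and $f(y_2)$, with $H_0$ bordering $P_2$ and $H_\lambda$ lying on the $P_1$-side. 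The displacement is then literally the gap between these two hyperplanes,
\begin{equation*}
	\dist(H_0,H_\lambda)=\lambda\,\frac{|f(y_1)-f(y_2)|}{\|y_1-y_2\|}\le C\lambda,
	\qquad C:=\max_{y\ne y'}\frac{|f(y)-f(y')|}{\|y-y'\|},
\end{equation*}
which yields $\delta\le C\lambda$. The delicate bookkeeping I anticipate is handling \emph{incomplete} interpolators: when a cell $G$ is truncated by the appearance of a third optimiser or by $\partial W$ before it meets $H_0$ or $H_\lambda$, the relevant endpoint is attained only on the affine extension of $g$, and I must argue that the point realising $f(y_k)$ still lies within $C\lambda$ of the corresponding cell $P_k$. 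Verifying that the finitely many cells of the common refinement account for all pieces, and that every point at which $g$ meets a boundary value sits at distance at most the inter-hyperplane gap from $P_k$, is the part that needs care; the constant $C$ above is exactly what makes the displacement linear in $\lambda$.
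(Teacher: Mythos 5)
Your reduction of \eqref{E:fl-def} to $f_\lambda(w)=\il\bigl[M_\lambda(w)-m(w)\bigr]$ and the resulting proof of Property~A1 is exactly the paper's argument (its Observations~\ref{O:c-cont} and \ref{O:clf-cont}). Your Property~A2 is correct and in fact cleaner than the paper's: the paper proves monotonicity by a direct chain of inequalities using \eqref{E:perturbed-solver-min} and Observation~\ref{O:sendwich}, and then handles the vanishing of $\Wd$ separately with an ad hoc constant $c/d$; your exact characterization $w\in\We\iff\lambda\le\lambda^*(w)$ delivers both conclusions at once (one pedantic point: at tie points some of your ratios are $0$ rather than positive, but you correctly relegate these to a measure-zero set, matching the paper's a.e.\ uniqueness reduction).

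Property~A3, however, contains a genuine gap, and it sits precisely where you flagged ``delicate bookkeeping.'' The definition of a $\delta$-interpolator demands a point $w_k$ with $g(w_k)=f(y_k)$ and $\dist(w_k,P_k)\le\delta$, i.e.\ a bound on the distance to the \emph{cell} $P_k$ of the unperturbed decomposition --- not the distance between the two tie hyperplanes. Your assertion that the displacement ``is literally the gap between $H_0$ and $H_\lambda$'' does not substitute for this: a point of $H_\lambda=F_\lambda(y_1,y_2)$ at which $g=f(y_1)$ can lie arbitrarily far from $P_1$ by sliding laterally along the hyperplane, since $P_1$ occupies only a bounded portion of the half-space cut out by $H_0$ (and symmetrically, a point of $H_0$ need not be near $P_2$); this is exactly what goes wrong for the truncated components you mention, and you offer no argument there. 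Closing this hole is the actual content of the paper's proof: by Observation~\ref{O:facets}, $P_1$ is a polytope whose facets lie on hyperplanes $F(y_1,z_1),\dots,F(y_1,z_\ell)$; each facet hyperplane is shifted outward by at most $\lambda K$ (Observation~\ref{O:distance} with $K$ from \eqref{E:bound-K}) to form the inflated polytope $P_1^\lambda=\bigcap_k\bigl\{w: c_k(w)\le\lambda\bigl|f(z_k)-f(y_1)\bigr|\bigr\}$, which contains the perturbed cell $\{w: y_\lambda(w)=y_1\}$ and each of whose points is within $C\lambda$ of $P_1$; one then chooses $w_1\in P_1^\lambda\cap F_\lambda(y_1,y_2)$, and symmetrically $w_2\in P_2^\lambda\cap F(y_1,y_2)$. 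Note also that your proposed constant $C=\max_{y\ne y'}|f(y)-f(y')|/\|y-y'\|$ is the paper's $K$, which bounds only the individual facet shifts; the Hausdorff inflation $P_1\mapsto P_1^\lambda$ can exceed $\lambda K$ near sharp vertices (all facets move simultaneously), which is why Property~A3 and the paper claim only $\delta\le C\lambda$ for \emph{some} fixed $C$ depending on the cell geometry. Without this facet-by-facet inflation argument your proof of A3 does not go through as written.
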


Let us remark that already the continuity of $f_\lambda$ is not apparent from
its definition as the first term  $f\ylw$ is still a piecewise constant
function. Proof of this result, along with geometrical description of
$f_\lambda$, can be found in \secref{sec:app:proofs}.  \figref{fig:f-lambda-2d}
visualizes $f_{\lambda}$ for different values if $\lambda$.
\begin{figure}
  \centering
  \includegraphics[width=0.8\linewidth]{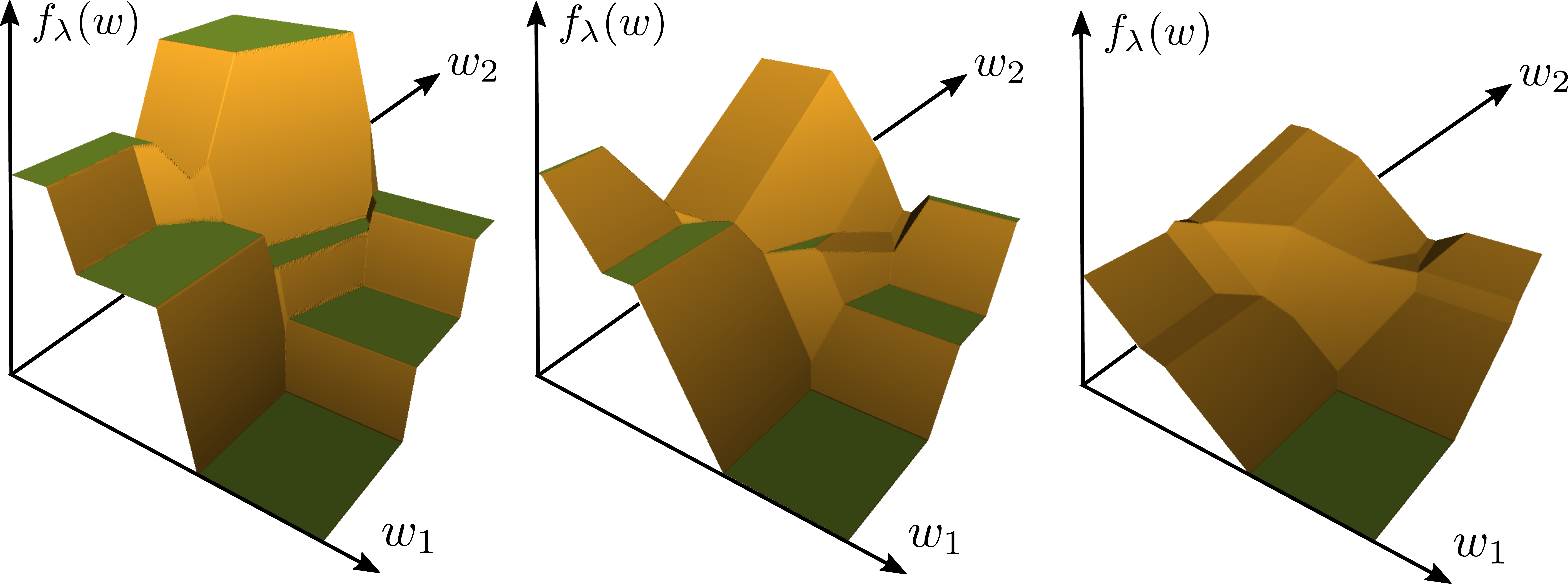}
	\caption{Example $f_\lambda$ for $w \in \R^2$ and $\lambda = 3, 10, 20$ (left
	to right). As $\lambda$ changes, the interpolation $f_\lambda$ is less
	faithful to the piecewise constant $f(y(w))$ but provides reasonable gradient
	on a larger set.
  }
  \label{fig:f-lambda-2d}
\end{figure}

Now, since $f_\lambda$ is ensured to be differentiable, we have
\begin{equation} \label{E:update-rule}
	\nabla f_\lambda(w)
		= -\il\Bigl[ \dccdw\wy - \dccdw\wyl\Bigr]
		= -\il\bigl[ y(w) - y_\lambda(w)\bigr].
\end{equation}
The second equality then holds due to \eqref{E:c-linear}.  We then return
$\nabla f_\lambda$ as a loss gradient.

\begin{remark}
The roots of the method we propose lie in loss-augmented inference. In fact,
the update rule from \eqref{E:update-rule} (but not the function $f_\lambda$ or
any of its properties) was already proposed in a different context in
\citep{NIPS2010_4069, pmlr-v48-songb16} and was later used in
\citep{opt-through-argmax, Mohapatra2016EfficientOF}. The main difference to our
work is that only the case of $\lambda \to 0^+$ is recommended and studied,
which in our situation computes the correct but uninformative zero gradient.
Our analysis implies that {\bf larger values of $\lambda$ are not only sound
but even preferable}. This will be seen in experiments where we use values
$\lambda \approx 10-20$.
\end{remark}

\subsection{Efficient Computation of $f_\lambda$}

Computing $y_\lambda$ in \eqref{E:yl-def} is the only potentially expensive
part of evaluating \eqref{E:update-rule}. However, the linear interplay of the
cost function and the gradient trivially gives a resolution.

\begin{prop}\label{prop:linear}
Let $\hat w \in W$ be fixed. If we set $w' = \hat w + \lambda\tfrac{\d L}{\d
y}(\hat y)$, we can compute $y_\lambda$ as
\begin{equation*}
	y_\lambda(\hat w) = \argmin_{y\in Y} \cc(w',y).
\end{equation*}
\end{prop}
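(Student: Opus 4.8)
The plan is to exploit the linearity of the cost together with the fact that the linearized loss $f$ is affine in $y$, so that adding $\lambda f(y)$ to $\cc(\hat w, y)$ merely shifts the weight vector. First I would record the two structural facts we need. By \eqref{E:c-linear} the cost is linear in $y$, namely $\cc(w,y) = w\cdot y$ (identifying $y$ with $\phi(y)$); and by its definition the linearization
\[
	f(y) = L(\hat y) + \frac{\d L}{\d y}(\hat y)\cdot(y-\hat y)
\]
is affine in $y$, with $y$-dependent part equal to the single linear term $\frac{\d L}{\d y}(\hat y)\cdot y$ and the remainder $L(\hat y) - \frac{\d L}{\d y}(\hat y)\cdot\hat y$ independent of $y$.

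Next I would substitute both expressions into the objective of the perturbed problem \eqref{E:yl-def} at $w=\hat w$ and collect the terms that multiply $y$:
\[
	\cc(\hat w, y) + \lambda f(y)
		= \hat w\cdot y + \lambda\,\frac{\d L}{\d y}(\hat y)\cdot y + \text{const}
		= \Bigl(\hat w + \lambda\,\frac{\d L}{\d y}(\hat y)\Bigr)\cdot y + \text{const},
\]
where the collected constant is independent of $y$. Recognizing the coefficient of $y$ as precisely $w' = \hat w + \lambda\frac{\d L}{\d y}(\hat y)$, the objective equals $\cc(w', y)$ up to an additive constant.

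Finally I would conclude by the elementary observation that an additive constant independent of $y$ does not change a minimizer over $Y$, so that
\[
	y_\lambda(\hat w) = \argmin_{y\in Y}\bigl\{\cc(\hat w, y) + \lambda f(y)\bigr\} = \argmin_{y\in Y}\cc(w', y),
\]
as claimed. There is essentially no obstacle here: the whole argument rests on the two linearity facts above, and the only point that genuinely uses the hypotheses is that $\cc$ is linear in its weight argument, which is exactly what lets the perturbation $\lambda\frac{\d L}{\d y}(\hat y)$ be absorbed into $\hat w$ additively rather than altering the form of the problem. This is also what keeps the backward pass at the cost of a single solver call, since $\argmin_{y\in Y}\cc(w', y)$ is just the original solver run on the modified input $w'$.
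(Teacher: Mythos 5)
Your proof is correct and follows essentially the same route as the paper's: expand $\cc(\hat w,y)+\lambda f(y)$ using the linearity of $\cc$ and the affine form of $f$, absorb $\lambda\tfrac{\d L}{\d y}(\hat y)$ into the weight vector to obtain $\cc(w',y)$ plus the constant $\lambda L(\hat y)-\lambda\tfrac{\d L}{\d y}(\hat y)\cdot\hat y$, and note that this constant does not affect the minimizer. The paper's proof is just a more compressed version of exactly this computation.
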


In other words, $y_\lambda$ is the output of calling the solver on input $w'$.

\section{Experiments}

In this section, we {\bf experimentally validate a proof of concept}: that
architectures containing exact blackbox solvers (with backward pass provided by
Algo.~\ref{algo:main}) can be trained by standard methods.

\begin{table*}[h]
	\caption{Experiments Overview.}
	\label{tab:experiments}
	\centering
	\begin{tabular}{@{\ }cccc@{\ }}
	Graph Problem & Solver & Solver instance size & Input format \\
	\hline
	Shortest path & Dijkstra & up to $900$ vertices & (image) up to $240 \times 240$ \\
	Min Cost PM & Blossom V & up to $1104$ edges & (image) up to $528\times528$ \\
	Traveling Salesman & Gurobi & up to $780$ edges & up to $40$ images ($20\times40$) \\
	\end{tabular}
\end{table*}

To that end, we solve three synthetic tasks as listed in
\tbref{tab:experiments}. These tasks are designed to mimic practical examples
from \Secref{sec:related-work} and solving them anticipates a two-stage
process:  1)~extract suitable features from raw input, 2)~solve a combinatorial
problem over the features. The dimensionalities of input and of intermediate
representations also aim to mirror practical problems and are chosen to be
prohibitively large for zero-order gradient estimation methods. Guidelines of
setting the hyperparameter $\lambda$ are given in
\secref{sec:app:guidlinelambda}.

We include the performance of ResNet18 \citep{He2015DeepRL} as a sanity check to
demonstrate that the constructed datasets are too complex for standard
architectures.

\begin{remark}
The included solvers have very efficient implementations and do not severely
impact runtime. All models train in under two hours on a single machine with 1
GPU and no more than 24 utilized CPU cores. Only for the large TSP problems the
solver's runtime dominates.
\end{remark}

\subsection{Warcraft Shortest Path}

\paragraph{Problem input and output.} The training dataset for problem {\sc
SP}$(k)$ consists of 10000 examples of randomly generated images of terrain
maps from the Warcraft II tileset \citep{warcraft}. The maps have an underlying
grid of dimension $k \times k$ where each vertex represents a terrain with a
fixed cost that is unknown to the network.  The shortest (minimum cost) path
between top left and bottom right vertices is encoded as an indicator matrix
and serves as a label (see also  \figref{fig:sp-results}).  We consider
datasets {\sc SP}$(k)$ for $k \in \{12, 18, 24, 30\}$. More experimental
details are provided in \secref{se:app:exp}.

\setlength{\fboxsep}{0pt}
\setlength{\picHeight}{0.25\linewidth}
\begin{figure}[htb]
    \centering
    \begin{subfigure}[c]{0.64\linewidth}
			\centering
			\parbox[b][\picHeight][c]{1em}{\rotatebox{90}{Input}}
			\includegraphics[height=\picHeight]{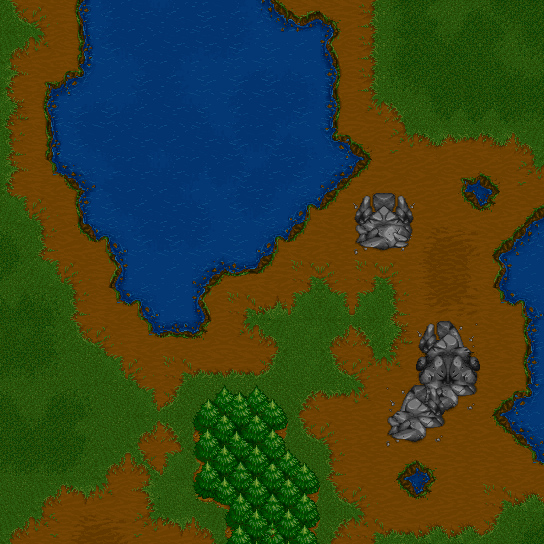}
			\vbox to\picHeight{\vfil\hbox{\LARGE$\to$}\vfil}
			\parbox[b][\picHeight][c]{\picHeight}{
			\centering
			Label
			$$
				\begin{pmatrix}
					1&0&\cdots&0\\
					1&0&\cdots&0\\
					\vdots & \vdots & \ddots &\vdots\\
					0&0&\cdots&1
				\end{pmatrix}
			$$
			$k\times k$ indicator matrix of shortest path}
			\caption{}
			\label{fig:sp-results:dataset}
    \end{subfigure}
		\hspace{.04\linewidth}
    \begin{subfigure}[c]{0.3\linewidth}
			\centering
			\includegraphics[height=\picHeight]{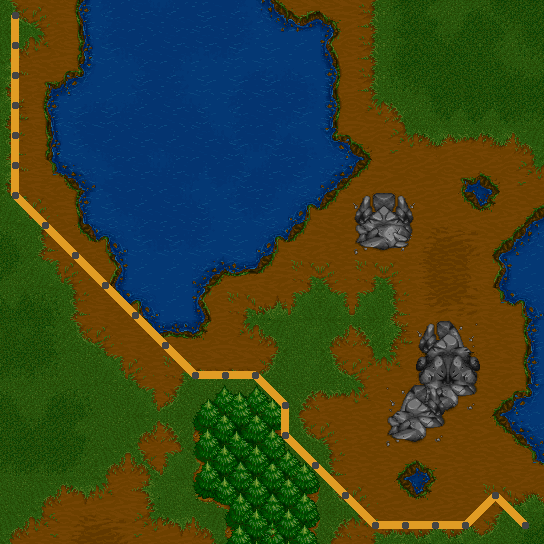}
			\caption{}
			\label{fig:sp-results:withpath}
    \end{subfigure}
		\caption{The {\sc SP}$(k)$ dataset. (\subref{fig:sp-results:dataset}) Each
		input is a $k \times k$ grid of tiles corresponding to a Warcraft II
		terrain map, the respective label is a the matrix indicating the shortest
		path from top left to bottom right. (\subref{fig:sp-results:withpath}) is a
		different map with correctly predicted shortest path.}
    \label{fig:sp-results}
\end{figure}

\paragraph{Architecture.} An image of the terrain map is presented to a
convolutional neural network which outputs a $k \times k $ grid of vertex
costs.  These costs are then the input to the Dijkstra algorithm to compute the
predicted shortest path for the respective map.  The loss used for computing
the gradient update is the Hamming distance between the true shortest path and
the predicted shortest path.

\begin{wraptable}{r}{.6\textwidth}
	\setlength{\belowcaptionskip}{-10pt plus 10pt minus 10pt}
	\centering
	\caption{{\bf Results for Warcraft shortest path.} Reported is the accuracy,
	\ie percentage of paths with the optimal costs. Standard deviations are over
	five restarts.}
	\label{tab:sp-results}
	\begin{tabular}{@{\ }ccccc@{\ }}
	& \multicolumn{2}{c}{Embedding Dijkstra}&\multicolumn{2}{c}{ResNet18}\\
	$k$ & Train \% & Test \% & Train \%& Test \%\\
	\hline
	12 & $99.7\std{0.0}$ & $96.0\std{0.3}$ & $100.0\std{0.0}$ & $23.0\std{0.3}$\\
	18 & $98.9\std{0.2}$ & $94.4\std{0.2}$ & $99.9\std{0.0}$  & $0.7\std{0.3}$ \\
	24 & $97.8\std{0.2}$ & $94.4\std{0.6}$ & $100.0\std{0.0}$ & $0.0\std{0.0}$ \\
	30 & $97.4\std{0.1}$ & $94.0\std{0.3}$ & $95.6\std{0.5}$  & $0.0\std{0.0}$ \\
	\end{tabular}
\end{wraptable}

\paragraph{Results.} Our method learns to predict the shortest paths with high
accuracy and generalization capability, whereas the ResNet18 baseline
unsurprisingly fails to generalize already for small grid sizes of $k=12$.
Since the shortest paths in the maps are often nonunique (i.e. there are
multiple shortest paths with the same cost), we report the percentage of
shortest path predictions that have optimal cost. The results are summarized in
\tbref{tab:sp-results}.

\subsection{Globe Traveling Salesman Problem}

\paragraph{Problem input and output.} The training dataset for problem {\sc
TSP}$(k)$ consists of 10000 examples where the input for each example is a
$k$-element subset of fixed 100 country flags and the label is the shortest
traveling salesman tour through the capitals of the corresponding countries.
The optimal tour is represented by its adjacency matrix (see also
\figref{fig:tsp}). We consider datasets {\sc TSP}$(k)$ for $k \in \{5, 10, 20,
40\}$.

\setlength{\picHeight}{0.24\linewidth}
\begin{figure}[htb]
	\begin{subfigure}[c]{0.6\linewidth}
		\centering
		\parbox[b][\picHeight][c]{2em}{%
			\rotatebox{90}{\parbox{\picHeight}{\centering Input\\$k$ flags}}}
		\parbox[b][\picHeight][c]{1em}{%
			$\left\{\vbox to 0.5\picHeight{} \right.$}
		\includegraphics[height=\picHeight]{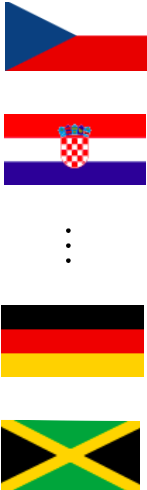}
		\parbox[b][\picHeight][c]{0.5\picHeight}{
			\centering
			\includegraphics[height=0.4\picHeight]{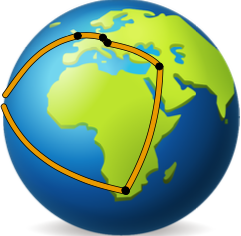}
			{\LARGE$\longrightarrow$}}
		\parbox[b][\picHeight][c]{\picHeight}{
		\centering
		Label
		$$
			\begin{pmatrix} 
				0 & \dots & 1 \\
				\vdots & \ddots & \vdots \\
				1 & \dots & 0 
			\end{pmatrix}
		$$
		$k \times k$ adjacency matrix with optimal TSP tour}
		\caption{}
		\label{fig:tsp:dataset}
	\end{subfigure}
	\hspace{.09\linewidth}
	\begin{subfigure}[c]{0.3\linewidth}
		\centering
		\includegraphics[height=\picHeight]{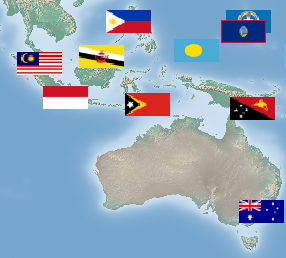}
		\caption{}
		\label{fig:tsp:oceania}
	\end{subfigure}
	\caption{The {\sc TSP}($k$) problem. (\subref{fig:tsp:dataset}) illustrates
	the dataset. Each input is a sequence of $k$ flags and the corresponding
	label is the adjacency matrix of the optimal TSP tour around the
	corresponding capitals. (\subref{fig:tsp:oceania}) displays the learned
	locations of 10 country capitals in southeast Asia and Australia, accurately
	recovering their true position.}
  \label{fig:tsp}
\end{figure}

\paragraph{Architecture.} Each of the $k$ flags is presented to a convolutional
network that produces $k$ three-dimensional vectors.  These vectors are
projected onto the unit sphere in $\R^3$; a representation of the globe.  The
TSP solver receives a matrix of pairwise distances of the $k$ computed
locations.  The loss of the network is the Hamming distance between the true
and the predicted TSP adjacency matrix. The architecture is expected to learn
the correct representations of the flags (\ie locations of the respective
countries' capitals on Earth, up to rotations of the sphere). The employed
Gurobi solver optimizes a mixed-integer programming formulation of TSP using
the cutting plane method \citep{Marchand:2002:CPI:772382.772395} for lazy
sub-tour elimination.

\begin{wraptable}{r}{.6\textwidth}
	\setlength{\belowcaptionskip}{-10pt plus 10pt minus 10pt}
	\centering
	\caption{{\bf Results for Globe TSP.} Reported is the full tour accuracy.
	Standard deviations are over five restarts.}
	\label{tab:tsp-results}
	\begin{tabular}{@{\ }ccccc@{\ }}
	& \multicolumn{2}{c}{Embedding TSP Solver}&\multicolumn{2}{c}{ResNet18}\\
	$k$ & Train \% & Test \% & Train \%& Test \%\\
	\hline
	5  &  $99.8\std{0.0}$& $99.2\std{0.1}$ &$100.0\std{0.0}$ & $1.9\std{0.6}$\\
	10 &  $99.8\std{0.1}$& $98.7\std{0.2}$ & $99.0\std{0.1}$ & $0.0\std{0.0}$\\
	20 &  $99.1\std{0.1}$& $98.4\std{0.4}$ & $98.8\std{0.3}$ & $0.0\std{0.0}$\\
	40 &  $97.4\std{0.2}$& $96.7\std{0.4}$ & $96.9\std{0.3}$ & $0.0\std{0.0}$\\
	\end{tabular}
\end{wraptable}

\paragraph{Results.} This architecture not only learns to extract the correct
TSP tours but also learns the correct representations. Quantitative evidence is
presented in \tbref{tab:tsp-results}, where we see that the learned locations
generalize well and lead to correct TSP tours also on the test set and also on
somewhat large instances (note that there are $39! \approx 10^{46}$ admissible
TSP tours for $k=40$). The baseline architecture only memorizes the training
set. Additionally, we can extract the suggested locations of world capitals and
compare them with reality. To that end, we present \figref{fig:tsp:oceania},
where the learned locations of 10 capitals in Southeast Asia are displayed.

\subsection{MNIST Min-cost Perfect Matching}

\paragraph{Problem input and output.} The training dataset for problem {\sc
PM}$(k)$ consists of 10000 examples where the input to each example is a set of
$k^2$ digits drawn from the MNIST dataset arranged in a $k\times k$ grid. For
computing the label, we consider the underlying $k\times k$ grid graph (without
diagonal edges) and solve a {\sc min-cost-perfect-matching} problem, where edge
weights are given simply by reading the two vertex digits as a two-digit number
(we read downwards for vertical edges and from left to right for horizontal
edges).  The optimal perfect matching (\ie the label) is encoded by an
indicator vector for the subset of the selected edges, see example in
\figref{fig:pm}.

\paragraph{Architecture.} The grid image is the input of a convolutional neural
network which outputs a grid of vertex weights.  These weights are transformed
into edge weights as described above and given to the solver.  The loss
function is Hamming distance between solver output and the true label.

\begin{wraptable}{r}{.63\textwidth}
	\centering
	\caption{{\bf Results for MNIST Min-cost perfect matching.} Reported is the
	accuracy of predicting an optimal matching. Standard deviations are over
	five restarts.}
	\label{tab:pm-results}
	\begin{tabular}{@{\ }cc@{\ \ }cc@{\ \ }c@{\ }}
	& \multicolumn{2}{c}{Embedding Blossom V}&\multicolumn{2}{c}{ResNet18}\\
	$k$ & Train \% & Test \% & Train \%& Test \%\\
	\hline
	4  & $99.97\std{0.01}$ & $98.32\std{0.24}$ & $100.0\std{0.0}$ & $92.5\std{0.3}$\\
	8  & $99.95\std{0.04}$ & $99.92\std{0.01}$ & $100.0\std{0.0}$ & $ 8.3\std{0.8}$\\
	16 & $99.02\std{0.84}$ & $99.06\std{0.57}$ & $100.0\std{0.0}$ & $ 0.0\std{0.0}$\\
	24 & $95.63\std{5.49}$ & $92.06\std{7.97}$ & $96.1\std{0.5}$ & $ 0.0\std{0.0}$\\
	\end{tabular}
\end{wraptable}

\paragraph{Results.} The architecture containing the solver is capable of good
generalizations suggesting that the correct representation is learned. The
performance is good even on larger instances and despite the presence of noise
in supervision -- often there are many optimal matchings. In contrast, the
ResNet18 baseline only achieves reasonable performance for the simplest case
{\sc PM}$(4)$.  The results are summarized in \tbref{tab:pm-results}.

\setlength{\picHeight}{0.15\linewidth}
\begin{figure}[htb]
  \centering
  \begin{subfigure}[c]{0.5\linewidth}
    \centering
		\parbox[b][\picHeight][c]{1em}{\rotatebox{90}{Input}}
    \includegraphics[height=\picHeight]{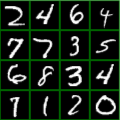}
		\vbox to\picHeight{\vfil\hbox{\LARGE$\to$}\vfil}
		\parbox[b][\picHeight][c]{0.7\picHeight}{
		\centering
		$$
			\begin{pmatrix}
				0\\ 1\\ \vdots\\ 1\\ 0
			\end{pmatrix}
		$$}
    \caption{}
    \label{fig:pm_dataset}
  \end{subfigure}
	\hspace{.09\linewidth}
  \begin{subfigure}[c]{0.3\linewidth}
    \centering
    \includegraphics[height=\picHeight]{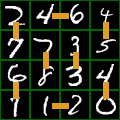}
    \caption{}
    \label{fig:pm_perfect_matching}
  \end{subfigure}
	\caption{Visualization of the {\sc PM} dataset. (\subref{fig:pm_dataset})
	shows the case of {\sc PM}$(4)$. Each input is a $4 \times 4$ grid of MNIST
	digits and the corresponding label is the indicator vector for the edges in
	the min-cost perfect matching. (\subref{fig:pm_perfect_matching}) shows the
	correct min-cost perfect matching output from the network. The cost of the
	matching is $348$ ($46+12$ horizontally and $27+45+40+67+78+33$ vertically).}
  \label{fig:pm}
\end{figure}

\section{Discussion}
We provide a unified mathematically sound algorithm to embed combinatorial
algorithms into neural networks. Its practical implementation is
straightforward and training succeeds with standard deep learning techniques.
The two main branches of future work are: 1) exploring the potential of newly
enabled architectures, 2) addressing standing real-world problems. The latter
case requires embedding approximate solvers (that are common in practice). This
breaks some of our theoretical guarantees but given their strong empirical
performance, the fusion might still work well in practice.

\section*{Acknowledgement}
We thank the International Max Planck Research School for Intelligent Systems (IMPRS-IS) for supporting Marin Vlastelica. We acknowledge the support from the German Federal Ministry of Education and Research (BMBF) through the Tübingen AI Center (FKZ: 01IS18039B). Additionally, we would like to thank Paul Swoboda and Alexander Kolesnikov for valuable feedback on an early version of the manuscript.

\bibliographystyle{iclr2020_conference}
%\bibliography{iclr2020_conference}

\appendix

\section{Appendix}

\subsection{Guidelines for Setting the Values of $\lambda$.}\label{sec:app:guidlinelambda}

In practice, $\lambda$ has to be chosen appropriately, but we found its exact
choice uncritical (no precise tuning was required).  Nevertheless, note that
$\lambda$ should cause a noticeable disruption in the optimization problem from
equation \eqref{E:yl-def}, otherwise it is too likely that $y(w) =
y_\lambda(w)$ resulting in a zero gradient.  In other words, $\lambda$ should
roughly be of the magnitude that brings the two terms in the definition of $w'$
in Prop.~\ref{prop:linear} to the same order:
\begin{equation*}
  \lambda\approx\frac{\langle w \rangle}{\left \langle\pdiff{L}{y} \right\rangle}
\end{equation*}
where $\langle \cdot \rangle$ stands for the average.  This again justifies
that $\lambda$ is a {\bf true hyperparameter} and that there is no reason to
expect values around $\lambda \to 0^+$.

\subsection{Proofs}\label{sec:app:proofs}

\begin{proof}[Proof of Proposition \ref{prop:linear}]
Let us write $L=L(\hat y)$ and $\nabla L = \tfrac{\d L}{\d y}(\hat y)$, for
brevity.  Thanks to the linearity of $\cc$ and the definition of $f$, we have
\begin{equation*}
	\cc(\hat w,y) + \lambda f(y)
		= \hat w y + \lambda\bigl( L + \nabla L(y-\hat y)\bigr)
		= (\hat w + \lambda\nabla L)y + \lambda L - \lambda\nabla L\hat y
		= \cc(w',y) + \cc_0,
\end{equation*}
where
$\cc_0=\lambda L - \lambda\nabla L\hat y$
and $w'=\hat w + \lambda\nabla L$ as desired.
The conclusion about the points of minima then follows.
\end{proof}

Before we prove Theorem~\ref{T:f-lambda}, we make some preliminary
observations.  To start with, due to the definition of the solver, we have the
fundamental inequality
\begin{equation} \label{E:solver-min}
       \cc(w,y) \ge \cc\wy
              \quad\text{for every $w\in W$ and $y\in Y$}.
\end{equation}

\begin{observation} \label{O:c-cont}
The function $w\mapsto \cc\wy$ is continuous and piecewise linear.
\end{observation}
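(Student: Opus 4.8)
The plan is to recognize $w\mapsto\cc\wy$ as the pointwise minimum of a finite family of linear functions and then invoke the elementary facts about such minima. First I would use the definition of the solver in \eqref{E:solver}: since $y(w)=\argmin_{y\in Y}\cc(w,y)$, the value $\cc\wy$ equals $\min_{y\in Y}\cc(w,y)$. By the linearity assumption \eqref{E:c-linear}, for each \emph{fixed} $y\in Y$ the map $w\mapsto\cc(w,y)=w\cdot\phi(y)$ is a linear (hence continuous) function of $w$ on $W\subseteq\R^N$.

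Next, because $Y$ is finite, the function $w\mapsto\cc\wy=\min_{y\in Y}\cc(w,y)$ is the pointwise minimum of finitely many linear functions. The minimum of finitely many continuous functions is continuous, which yields the first claim immediately (it is in fact concave, being a min of linear maps). For piecewise linearity I would partition $W$ according to which $y\in Y$ achieves the minimum: on the region $P_{y_0}=\{w: y(w)=y_0\}$ the function coincides with the single linear map $w\mapsto w\cdot\phi(y_0)$. Each such region is the intersection of finitely many closed half-spaces $\{w: w\cdot\phi(y_0)\le w\cdot\phi(y)\}$ over $y\in Y$, hence polyhedral, and these regions cover $W$. Thus the function is linear on every cell of a finite polyhedral subdivision of $W$, which is the definition of piecewise linear.

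There is no genuine obstacle here; the statement is the standard observation that a finite minimum of linear functions is a continuous, concave, piecewise linear function. The only point deserving a sentence of care is justifying that $\cc\wy$ truly equals the pointwise minimum, rather than merely being bounded below by each $\cc(w,y)$: this is exactly the content of the fundamental inequality \eqref{E:solver-min} together with the fact that the minimum is attained at $y=y(w)$, so that $\cc\wy\le\cc(w,y)$ for all $y\in Y$ with equality for $y=y(w)$.
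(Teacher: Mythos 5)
Your proof is correct and follows exactly the paper's approach: the paper's one-line argument likewise identifies $\cc\wy$ as the pointwise minimum of the finitely many linear functions $w\mapsto\cc(w,y)$, from which continuity and piecewise linearity follow. You simply spell out the details (the polyhedral regions and the equality with the minimum) that the paper leaves implicit.
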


\begin{proof}
Since $\cc$'s are linear and distinct,
$\cc\wy$, as their pointwise minimum, has the desired properties.
\end{proof}

Analogous fundamental inequality
\begin{equation} \label{E:perturbed-solver-min}
       \cc(w,y) + \lambda f(y)
              \ge \cc\wyl + \lambda f\ylw
              \quad\text{for every $w\in W$ and $y\in Y$}
\end{equation}
follows from the definition of the solution
to the optimization problem \eqref{E:yl-def}.

A counterpart of Observation~\ref{O:c-cont} reads as follows.

\begin{observation} \label{O:clf-cont}
The function $w\mapsto \cc\wyl + \lambda f\ylw$ is continuous and piecewise
affine.
\end{observation}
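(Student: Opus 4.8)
The plan is to recognize the function as the pointwise minimum of finitely many affine functions and then apply the same elementary reasoning already used for Observation~\ref{O:c-cont}. The starting point is that, by the definition \eqref{E:yl-def} of $y_\lambda(w)$ as the minimizer of $\cc(w,y)+\lambda f(y)$ over $y\in Y$, the value attained at that minimizer is precisely the minimum, i.e.
\begin{equation*}
	\cc\wyl + \lambda f\ylw = \min_{y\in Y}\bigl\{\cc(w,y)+\lambda f(y)\bigr\}.
\end{equation*}
Establishing this identity is the conceptual crux, but it is immediate from the $\argmin$ definition.

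First I would fix an arbitrary $y\in Y$ and inspect the $w$-dependence of a single summand. By \eqref{E:c-linear}, the term $\cc(w,y)=w\cdot\phi(y)$ is linear in $w$, while $\lambda f(y)$ is a constant once $y$ is fixed (it does not depend on $w$). Hence each map $w\mapsto\cc(w,y)+\lambda f(y)$ is an \emph{affine} function of $w$. The only departure from Observation~\ref{O:c-cont} is this additive constant $\lambda f(y)$, and it is exactly what upgrades the conclusion from ``piecewise linear'' to ``piecewise affine''.

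Next, since $Y$ is finite, the function in question is a pointwise minimum of finitely many affine functions. It is then standard that such a minimum is continuous (a finite minimum of continuous functions is continuous) and piecewise affine: on each region of $W$ where a particular index achieves the minimum the function coincides with the corresponding affine piece, and these regions are polyhedra that partition $W$. This reproduces the argument for Observation~\ref{O:c-cont} essentially verbatim, with ``linear'' replaced by ``affine''.

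There is no real obstacle here; the step that deserves the most care is the identification of $\cc\wyl+\lambda f\ylw$ with the pointwise minimum, together with the bookkeeping that the constants $\lambda f(y)$ merely shift the individual affine pieces and therefore affect neither continuity nor the piecewise-affine structure.
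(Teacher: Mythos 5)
Your proof is correct and follows exactly the paper's argument: both identify $\cc\wyl+\lambda f\ylw$ as the pointwise minimum over the finite set $Y$ of the affine functions $w\mapsto \cc(w,y)+\lambda f(y)$, from which continuity and piecewise affineness are standard. Your added remarks (the $\argmin$ identification and the role of the constants $\lambda f(y)$) merely spell out what the paper leaves implicit.
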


\begin{proof}
The function under inspection is a pointwise minimum
of distinct affine functions $w\mapsto\cc(w,y) + \lambda f(y)$
as $y$ ranges $Y$.
\end{proof}

As a consequence of above-mentioned fundamental inequalities,
we obtain the following two-sided estimates on $f_\lambda$.

\begin{observation} \label{O:sendwich}
The following inequalities hold for $w \in W$
\begin{equation*}
       f\ylw\le f_\lambda(w) \le f\yw.
\end{equation*}
\end{observation}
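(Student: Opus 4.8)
The plan is to derive each of the two inequalities directly from one of the fundamental inequalities, \eqref{E:solver-min} and \eqref{E:perturbed-solver-min}, combined with the explicit formula \eqref{E:fl-def} for $f_\lambda$. Throughout I abbreviate $y=y(w)$ and $y_\lambda=y_\lambda(w)$.

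For the lower bound $f\ylw\le f_\lambda(w)$, I would invoke \eqref{E:solver-min} at the point $y_\lambda$: since $y(w)$ minimizes $\cc(w,\cdot)$, we have $\cc\wyl\ge\cc\wy$, so the bracketed term $\cc\wy-\cc\wyl$ appearing in \eqref{E:fl-def} is nonpositive. Multiplying by $-\il<0$ turns it into a nonnegative correction added to $f\ylw$, which yields $f_\lambda(w)\ge f\ylw$ at once.

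For the upper bound $f_\lambda(w)\le f\yw$, the key move is to test the perturbed fundamental inequality \eqref{E:perturbed-solver-min} against the specific competitor $y=y(w)$ (rather than an arbitrary $y\in Y$), which gives $\cc\wy+\lambda f\yw\ge\cc\wyl+\lambda f\ylw$. Rearranging and dividing by $\lambda>0$ produces $f\yw-f\ylw\ge-\il\bigl(\cc\wy-\cc\wyl\bigr)$, whose right-hand side is exactly $f_\lambda(w)-f\ylw$ by \eqref{E:fl-def}; cancelling the common term $f\ylw$ delivers the claim.

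I do not expect a genuine obstacle: both bounds are one-line rearrangements once the correct test point is chosen. The two points worth watching are the sign bookkeeping introduced by the factor $-\il$, and the conceptual observation that the two bounds originate from two different optimality principles — the lower bound from optimality of $y(w)$ for the \emph{unperturbed} cost $\cc(w,\cdot)$, and the upper bound from optimality of $y_\lambda(w)$ for the \emph{perturbed} cost $\cc(w,\cdot)+\lambda f(\cdot)$.
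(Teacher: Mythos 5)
Your proof is correct and follows exactly the paper's argument: the lower bound from the unperturbed optimality \eqref{E:solver-min} making the correction term $-\il\bigl[\cc\wy-\cc\wyl\bigr]$ in \eqref{E:fl-def} nonnegative, and the upper bound from testing \eqref{E:perturbed-solver-min} against the competitor $y=y(w)$. The sign bookkeeping and the rearrangement are both accurate, so nothing needs fixing.
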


\begin{proof}
Inequality \eqref{E:solver-min} implies that $\cc\wy-\cc\wyl \le 0$ and the
first inequality then follows simply from the definition of $f_\lambda$.  As
for the second one, it suffices to apply \eqref{E:perturbed-solver-min} to
$y=y(w)$.
\end{proof}

Now, let us introduce few notions that will be useful later in the proofs. For
a fixed $\lambda$, $W$ partitions into maximal connected sets $P$ on which
$y_\lambda(w)$ is constant (see \figref{fig:Wpartitioning}). We denote this
collection of sets by $\WW_\lambda$ and set $\WW=\WW_0$.

For $\lambda\in\R$ and $y_1\ne y_2\in Y$, we denote
\begin{equation*}
	F_\lambda(y_1,y_2)
		= \bigl\{w\in W: c(w,y_1) + \lambda f(y_1) = c (w,y_2) + \lambda f(y_2)
			\bigr\}.
\end{equation*}
We write $F(y_1,y_2)=F_0(y_1,y_2)$, for brevity. For technical reasons, we also
allow negative values of $\lambda$ here.

\begin{figure}[htb]
\centering
  \begin{subfigure}[t]{0.45\linewidth}
		\centering
		\includegraphics[width=\linewidth]{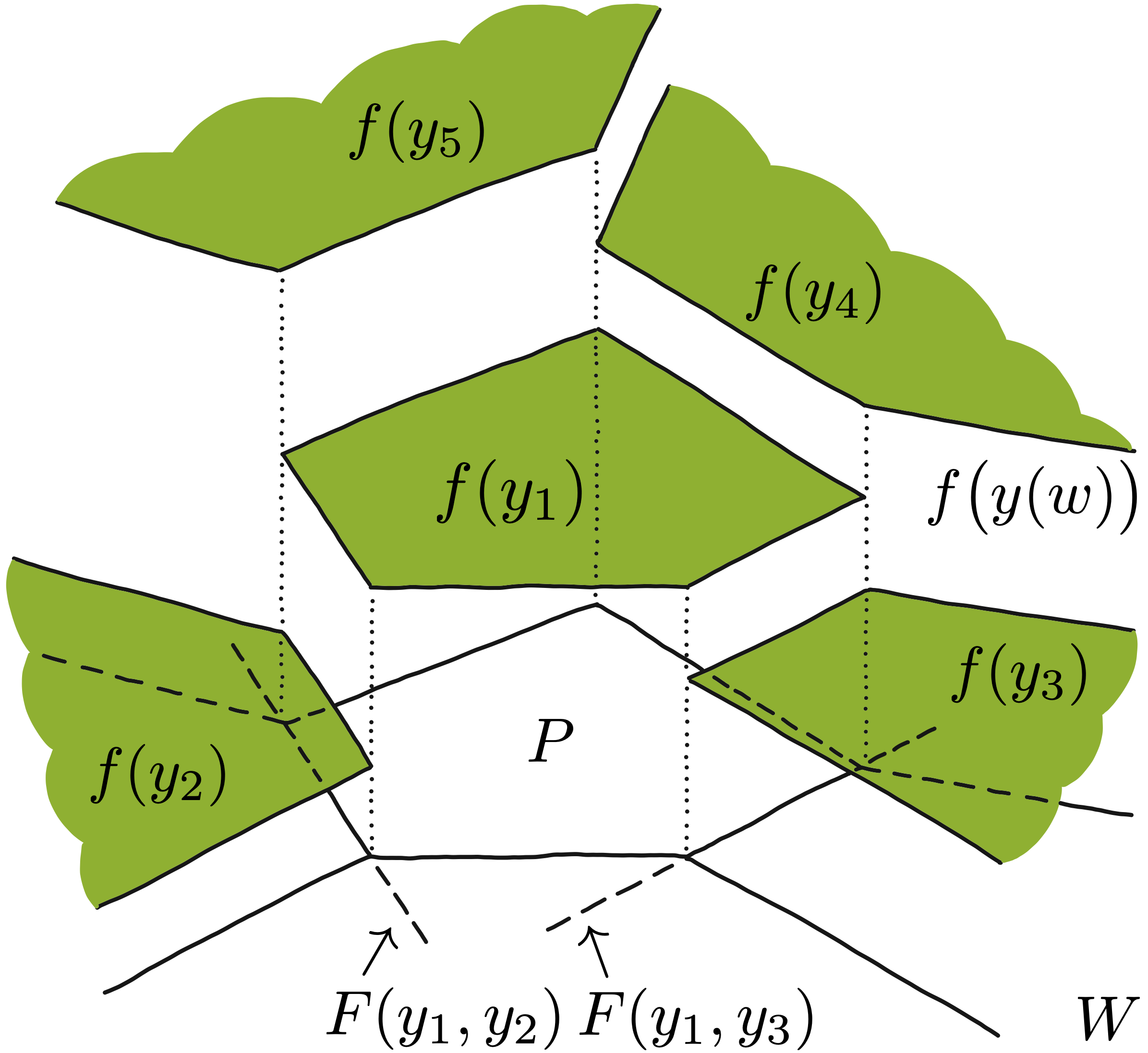}
		\caption{The situation for $\lambda=0$. We can see the polytope
		$P$ on which $y(w)$ attains $y_1\in Y$. The boundary of $P$
		is composed of segments of lines $F(y_1,y_k)$ for $k=2,\ldots,5$.}
		\label{fig:W0-partitioning}
	\end{subfigure}
	\hfil
  \begin{subfigure}[t]{0.45\linewidth}
		\centering
		\includegraphics[width=\linewidth]{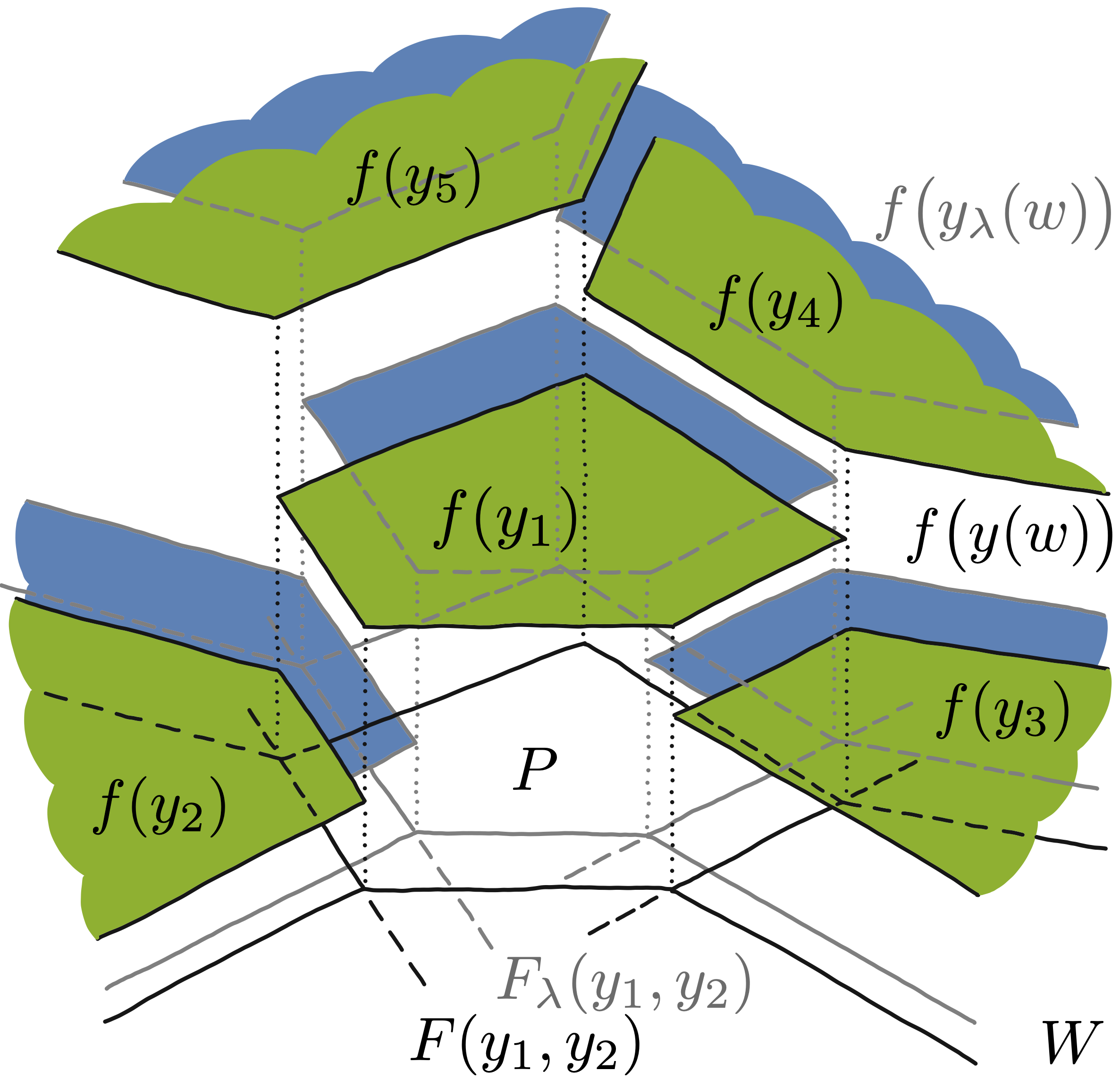}
		\caption{The same situation is captured for some relatively small
		$\lambda>0$.  Each line $F_\lambda(y_1,y_k)$ is parallel to its corresponding
		$F(y_1,y_k)$ and encompasses a convex polytope in $\WW_\lambda$.}
		\label{fig:Wlambda-partitioning}
	\end{subfigure}
\caption{ The family $\WW_\lambda$ of all maximal connected sets $P$ on which
$y_\lambda$ is constant.}
\label{fig:Wpartitioning}
\end{figure}

Note, that if $W=\R^N$, then $F_\lambda$ is a hyperplane since $\cc$'s are
linear. In general, $W$ may just be a proper subset of $\R^N$ and, in that case,
$F_\lambda$ is just the restriction of a hyperplane onto $W$.
Consequently, it may happen that $F_\lambda(y_1,y_2)$ will be empty
for some pair of $y_1$, $y_2$ and some $\lambda\in\R$.
To emphasize this fact, we say ``hyperplane in $W$''.
Analogous considerations should be taken into
account for all other linear objects. The note ``in $W$'' stands for
the intersection of these linear object with the set $W$.

\begin{observation} \label{O:facets}
Let $P\in\WW_\lambda$ and let $y_\lambda(w)=y$ for $w\in P$.
Then $P$ is a convex polytope in $W$, where the facets
consist of parts of finitely many hyperplanes
$F_\lambda(y,y_k)$ in $W$ for some $\{y_k\}\subset Y$.
\end{observation}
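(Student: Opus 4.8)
The plan is to exploit the linearity of $\cc$ in its first argument so as to realize $P$ as a finite intersection of half-spaces. Fix $\lambda$ and the value $y\in Y$ with $y_\lambda(w)=y$ on $P$. By the linearity assumption \eqref{E:c-linear}, for each fixed $y'\in Y$ the map $w\mapsto \cc(w,y')+\lambda f(y')$ is affine in $w$. Consequently, the requirement that $y$ minimize the perturbed objective \eqref{E:yl-def}, namely
\begin{equation*}
	\cc(w,y)+\lambda f(y)\le \cc(w,y')+\lambda f(y')
	\quad\text{for every $y'\in Y$,}
\end{equation*}
is a conjunction of affine inequalities in $w$. Each such inequality cuts out a half-space in $W$ (a genuine half-space when $W=\R^N$, otherwise its trace on $W$), whose bounding hyperplane is exactly $F_\lambda(y,y')$.

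First I would set
\begin{equation*}
	P' = \bigcap_{y'\in Y,\, y'\ne y}\bigl\{w\in W: \cc(w,y)+\lambda f(y)\le \cc(w,y')+\lambda f(y')\bigr\},
\end{equation*}
the set of all $w$ at which $y$ attains the perturbed minimum. Since $Y$ is finite, $P'$ is a finite intersection of half-spaces in $W$, hence a convex polytope in $W$; in particular it is connected. I would then argue that $P'$ agrees with $P$: on the interior of $P'$ the minimizer is unique and equals $y$, so $y_\lambda\equiv y$ there, and $P$ --- being a maximal connected set carrying the constant value $y$ --- is recovered as $P'$ (up to boundary).

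The facet structure then reads off directly from the half-space description. A facet of $P'$ is its intersection with the bounding hyperplane of one of the active constraints, which is precisely a portion of the corresponding $F_\lambda(y,y_k)$; only finitely many constraints can be active, since $Y$ is finite, so finitely many hyperplanes $F_\lambda(y,y_k)$ suffice. This yields the claimed description.

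The main obstacle I anticipate is not the convex-geometry step but the careful reconciliation of the set-theoretic definition of $P$ --- a maximal connected region on which $y_\lambda$ is \emph{constant}, which tacitly presupposes a uniquely determined minimizer --- with the closed polytope $P'$ of \emph{all} minimizers. One must account for the lower-dimensional subsets of $\partial P'$ where the $\argmin$ in \eqref{E:yl-def} is non-unique and $y_\lambda$ is therefore ambiguous; showing that these ties are confined to the hyperplanes $F_\lambda(y,y_k)$, so that they neither enlarge $P$ nor spawn spurious components, is the delicate part. Once this is settled (by a tie-breaking convention or by passing to closures), the linearity-driven half-space representation delivers both the polytope claim and the facet description.
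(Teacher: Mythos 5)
Your proposal is correct and follows essentially the same route as the paper's proof: both realize $P$ as a finite intersection of the half-spaces $\bigl\{w\in W: \cc(w,y)+\lambda f(y)\le \cc(w,y')+\lambda f(y')\bigr\}$, each bounded by a hyperplane $F_\lambda(y,y')$, and intersect with $W$ when $W\subsetneq\R^N$. You are in fact more careful than the paper, which silently identifies the maximal connected set $P$ with the full minimizer polytope and never addresses the tie-breaking of $y_\lambda$ on the bounding hyperplanes --- a genuine (if harmless) elision that your closing paragraph correctly isolates and resolves.
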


\begin{proof}
Assume that $W=\R^N$. The values of $y_\lambda$ may only change
on hyperplanes of the form $F_\lambda(y,y')$ for some $y'\in Y$.
Then $P$ is an intersection of corresponding half-spaces
and therefore $P$ is a convex polytope.
If $W$ is a proper subset of $\R^N$ the claim follows
by intersecting all the objects with $W$.
\end{proof}

\begin{observation} \label{O:distance}
Let $y_1, y_2\in Y$ be distinct.
If nonempty, the hyperplanes $F(y_1,y_2)$ and $F_\lambda(y_1,y_2)$ are
parallel and their distance is equal to $|\lambda| K(y_1,y_2)$, where
\begin{equation*}
	K(y_1,y_2)
		= \frac{|f(y_1) - f(y_2)|}{\|y_1 - y_2\|}.
\end{equation*}
\end{observation}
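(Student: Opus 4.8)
The plan is to reduce the whole statement to the elementary fact about the Euclidean distance between two parallel affine hyperplanes. First I would exploit the linearity of the cost, $\cc(w,y) = w\cdot y$, to rewrite the two defining level sets as explicit linear equations. The condition defining $F_\lambda(y_1,y_2)$, namely $\cc(w,y_1)+\lambda f(y_1)=\cc(w,y_2)+\lambda f(y_2)$, rearranges to
\[
  (y_1-y_2)\cdot w = \lambda\bigl(f(y_2)-f(y_1)\bigr),
\]
whereas $F(y_1,y_2)=F_0(y_1,y_2)$ is governed by the identical left-hand side with right-hand side $0$.

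Second, I would observe that both sets share the common normal vector $y_1-y_2$. Since $\phi$ is injective and $y_1\ne y_2$, we have $y_1-y_2\ne 0$, so (whenever nonempty) these are genuine hyperplanes in $W$, and they are parallel because they differ only in their constant term. I then invoke the standard formula: for a nonzero normal $n$, the distance between the parallel hyperplanes $\{w:n\cdot w=a\}$ and $\{w:n\cdot w=b\}$ equals $|a-b|/\|n\|$. Taking $n=y_1-y_2$, $a=0$, and $b=\lambda\bigl(f(y_2)-f(y_1)\bigr)$ yields
\[
  \dist\bigl(F(y_1,y_2),F_\lambda(y_1,y_2)\bigr)
    = \frac{|\lambda|\,|f(y_1)-f(y_2)|}{\|y_1-y_2\|}
    = |\lambda|\,K(y_1,y_2),
\]
which is exactly the claim.

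I do not expect any serious obstacle here; the computation is essentially one line once the equations are in normal form. The single point deserving a word of care is the \emph{``in $W$''} caveat introduced before this observation: when $W$ is a proper subset of $\R^N$, the sets $F(y_1,y_2)$ and $F_\lambda(y_1,y_2)$ are the restrictions to $W$ of ambient hyperplanes, and the distance quoted in the statement is the ambient Euclidean distance between those hyperplanes. That ambient distance depends only on the normal direction and the offsets $a$ and $b$, so the formula above holds verbatim irrespective of $W$; the nonemptiness hypothesis merely guarantees that the objects whose separation we are measuring actually occur. For this reason I would state the distance formula directly in $\R^N$ and only afterward note that intersecting with $W$ changes nothing.
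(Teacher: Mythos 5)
Your proof is correct and takes essentially the same route as the paper's: both rewrite $F(y_1,y_2)$ and $F_\lambda(y_1,y_2)$ as level sets of the linear function $w\mapsto w\cdot(y_1-y_2)$ at values $0$ and $\lambda\bigl(f(y_2)-f(y_1)\bigr)$, and then apply the standard distance formula for parallel hyperplanes (the paper phrases this as the distance of $F_\lambda(y_1,y_2)$ from the origin, through which $F(y_1,y_2)$ passes, which is the same computation). Your explicit remarks that $y_1-y_2\ne 0$ by injectivity of $\phi$ and that the ambient distance is unaffected by restricting to $W$ are minor clarifications consistent with the paper's conventions.
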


\begin{proof}
If we define a function $c(w)=\cc(w,y_1)-\cc(w,y_2)=w(y_1-y_2)$ and
a constant $C=f(y_2)-f(y_1)$, then
our objects rewrite to
\begin{equation*}
	F(y_1,y_2)=\{w\in W: c(w)=0\}
		\quad\text{and}\quad
	F_\lambda(y_1,y_2)=\{w\in W: c(w)=\lambda C\}.
\end{equation*}
Since $c$ is linear, these sets are parallel
and $F(y_1,y_2)$ intersects the origin.
Thus, the required distance is the distance of the hyperplane $F_\lambda(y_1,y_2)$
from the origin, which equals to $|\lambda C|/\|y_1-y_2\|$.
\end{proof}

As the set $Y$ is finite, there is a uniform upper bound $K$ on all values
of $K(y_1, y_2)$. Namely
\begin{equation} \label{E:bound-K}
	K = \max_{\substack{y_1,y_2\in Y\\y_1\ne y_2}} K(y_1,y_2).
\end{equation}

\subsubsection{Proof of Theorem~\ref{T:f-lambda}}

\begin{proof}[Proof of Property~A1]
Now, Property~A1 follows, since
\begin{equation*}
	f_\lambda(w)
		= \il \Bigl[ \cc\wyl + \lambda f\ylw \Bigr] - \il \cc\wy
\end{equation*}
and $f_\lambda$ is a difference of continuous and piecewise affine functions.
\end{proof}

\begin{proof}[Proof of Property~A2]
Let $0<\lambda_1\le \lambda_2$ be given.  We show that $\We[\lambda_2]\subseteq
\We[\lambda_1]$ which is the same as showing  $\Wd[\lambda_1]\subseteq
\Wd[\lambda_2]$. Assume that $w\in\We[\lambda_2]$, that is, by the definition
of $\We[\lambda_2]$ and $f_\lambda$,
\begin{equation} \label{E:Weq-l2}
	\cc\wy + \lambda_2 f\yw
		= \cc(w,y_2) + \lambda_2 f(y_2),
\end{equation}
in which we denoted $y_2=y_{\lambda_2}(w)$.
Our goal is to show that
\begin{equation} \label{E:Weq-l1}
	\cc\wy + \lambda_1 f\yw
		= \cc(w,y_1) + \lambda_1 f(y_1),
\end{equation}
where $y_1=y_{\lambda_1}(w)$ as this equality then guarantees that
$w\in\We[\lambda_1]$.  Observe that \eqref{E:perturbed-solver-min} applied to
$\lambda=\lambda_1$ and $y=y(w)$, yields the inequality ``$\ge$'' in
\eqref{E:Weq-l1}.

Let us show the reversed inequality.  By Observation~\ref{O:sendwich} applied
to $\lambda=\lambda_1$, we have
\begin{equation} \label{E:fyw-ge-fy1}
	f\yw \ge f(y_1).
\end{equation}
We now use \eqref{E:perturbed-solver-min} with $\lambda=\lambda_2$ and $y=y_1$,
followed by equality \eqref{E:Weq-l2} to obtain
\begin{align*}
	\cc(w,y_1) + \lambda_1 f(y_1)
		& = \cc(w,y_1) + \lambda_2 f(y_1)
			+ (\lambda_1-\lambda_2) f(y_1)
			\\
		& \ge \cc(w,y_2) + \lambda_2 f(y_2)
			+ (\lambda_1-\lambda_2) f(y_1)
			 \\
		& = \cc\wy + \lambda_2 f\yw
			+ (\lambda_1-\lambda_2) f(y_1)
			 \\
		& = \cc\wy + \lambda_1 f\yw
			+ (\lambda_2-\lambda_1)\bigl[f\yw - f(y_1)\bigr]
			 \\
		& \ge \cc\wy + \lambda_1 f\yw
\end{align*}
where the last inequality holds due to \eqref{E:fyw-ge-fy1}.

Next, we have to show that $\Wd\to\emptyset$ as $\lambda\to 0^+$, \ie that for
almost every $w\in W$, there is a $\lambda>0$ such that $w\notin\Wd$.  To this
end, let $w\in W$ be given.  We can assume that $y(w)$ is a unique solution of
solver \eqref{E:solver}, since two solutions, say $y_1$ and $y_2$, coincide
only on the hyperplane $F(y_1,y_2)$ in $W$, which is of measure zero.  Thus,
since $Y$ is finite, the constant
\begin{equation*}
	c = \min_{\substack{y\in Y\\y\ne y(w)}} \bigr\{\cc(w,y)-\cc\wy\bigr\}
\end{equation*}
is positive. Denote
\begin{equation} \label{E:def-const-d}
	d = \max_{y\in Y} \bigr\{f\yw-f(y)\bigr\}.
\end{equation}
If $d>0$, set $\lambda<c/d$. Then, for every $y\in Y$ such that $f\yw > f(y)$,
we have
\begin{equation*}
	\lambda < \frac{\cc(w,y)-\cc\wy}{f\yw-f(y)}
\end{equation*}
which rewrites
\begin{equation} \label{E:tmp1}
	\cc\wy + \lambda f\yw
		< \cc(w,y) + \lambda f(y).
\end{equation}
For the remaining $y$'s, \eqref{E:tmp1} holds trivially for every $\lambda>0$.
Therefore, $y(w)$ is a solution of the minimization problem~\eqref{E:yl-def},
whence $y_\lambda(w)=y(w)$. This shows that $w\in\We$ as we wished.  If $d=0$,
then $f\yw\le f(y)$ for every $y\in Y$ and \eqref{E:tmp1} follows again.
\end{proof}

\begin{proof}[Proof of Property~A3]
Let $y_1\ne y_2\in Y$ be given.  We show that on the component of the set
\begin{equation} \label{E:wcap}
	\{w\in W: \text{$y(w)=y_1$ and $y_\lambda(w)=y_2$}\}
\end{equation}
the function $f_\lambda$ agrees with a $\delta$-interpolator, where $\delta\le
C\lambda$ and $C>0$ is an absolute constant.  The claim follows as there are
only finitely many sets and their components of the form \eqref{E:wcap} in
$\Wd$.

Let us set
\begin{equation*}
	h(w) = \cc(w,y_1)-\cc(w,y_2)
		\quad\text{for $w\in W$}
\end{equation*}
and
\begin{equation*}
	g(w) = f(y_2) - \il h(w).
\end{equation*}
The condition on $\cc$ tells us that $h$ is a non-constant affine function.  It
follows by the definition of $F(y_1,y_2)$ and $F_\lambda(y_1,y_2)$ that
\begin{equation} \label{E:h-attains-0}
	h(w)=0
		\quad\text{if and only if}
		\quad w\in F(y_1,y_2)
\end{equation}
and
\begin{equation} \label{E:h-attains-lambdas}
	h(w)=\lambda\bigl( f(y_2)-f(y_1) \bigr)
		\quad\text{if and only if}
		\quad w\in F_\lambda(y_1,y_2).
\end{equation}
By Observation~\ref{O:distance}, the sets $F$ and $F_\lambda$ are
parallel hyperplanes. Denote by $G$ the nonempty intersection of their
corresponding half-spaces in $W$. We show that $g$ is a $\delta$-interpolator
of $f$ on $G$ between $y_1$ and $y_2$, with $\delta$ being linearly controlled
by $\lambda$.

We have already observed that $g$ is the affine function ranging
from $f(y_1)$ -- on the set $F_\lambda(y_1,y_2)$ -- to
$f(y_2)$ -- on the set $F(y_1,y_2)$. It remains to show that
$g$ attains both the values $f(y_1)$ and $f(y_2)$
at most $\delta$-far from the sets $P_1$ and $P_2$, respectively,
where $P_k\in\WW$ denotes a component of the set $\{w\in W: y(w)=y_k\}$,
$k=1,2$.

\begin{figure}[htb]
\centering
  \begin{subfigure}[t]{0.45\linewidth}
		\centering
		\includegraphics[width=0.85\linewidth]{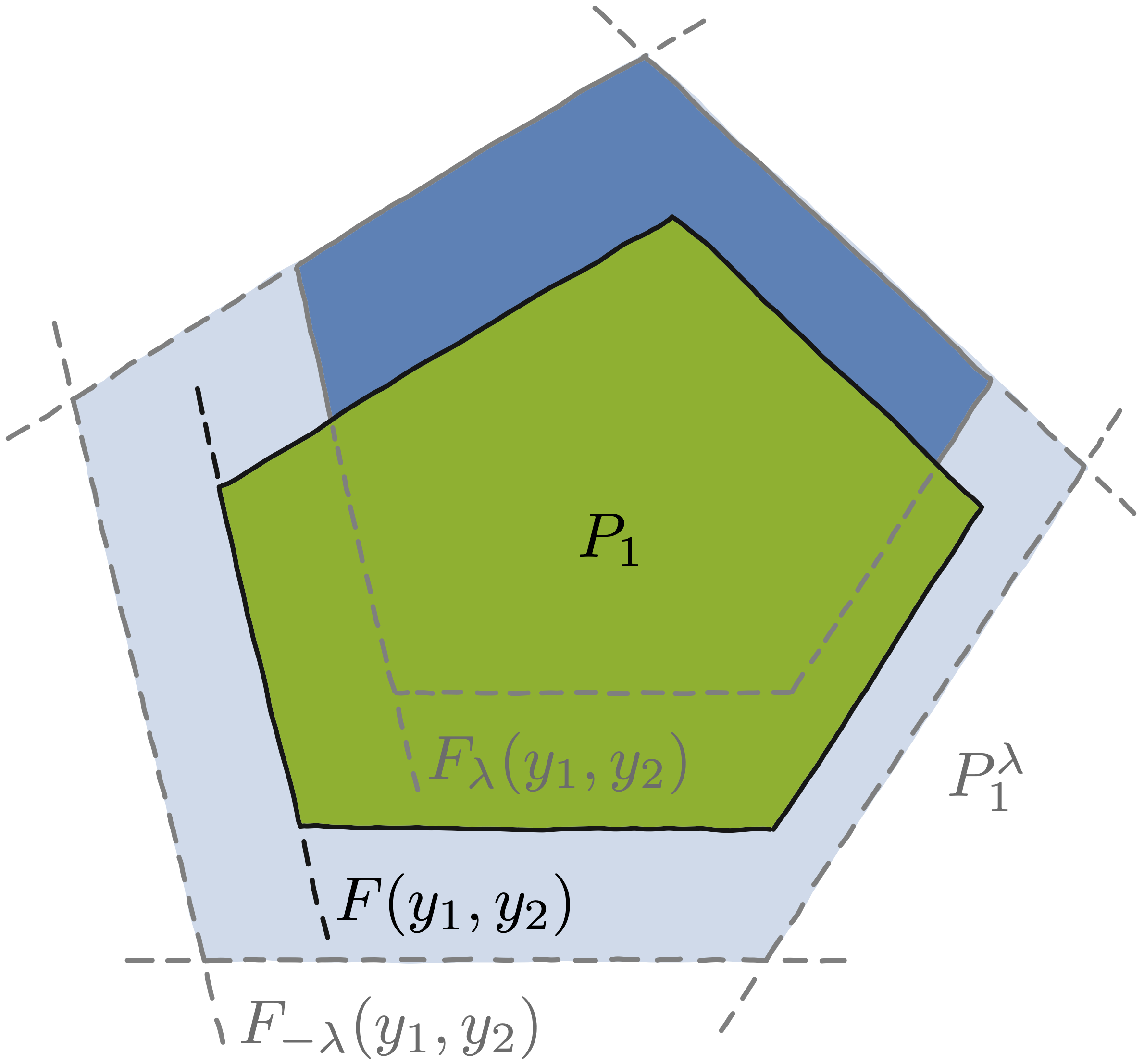}
		\caption{ The facets of $P_1$ consist of parts of hyperplanes $F(y_1,z_k)$
		in $W$.  Each facet $F(y_1,z_k)$ has its corresponding shifts $F_\lambda$
		and $F_{-\lambda}$, from which only one intersects $P$.  The polytope
		$P_1^\lambda$ is then bounded by those outer shifts.}
		\label{fig:polytope}
	\end{subfigure}
	\hfil
  \begin{subfigure}[t]{0.45\linewidth}
		\centering
		\includegraphics[width=\linewidth]{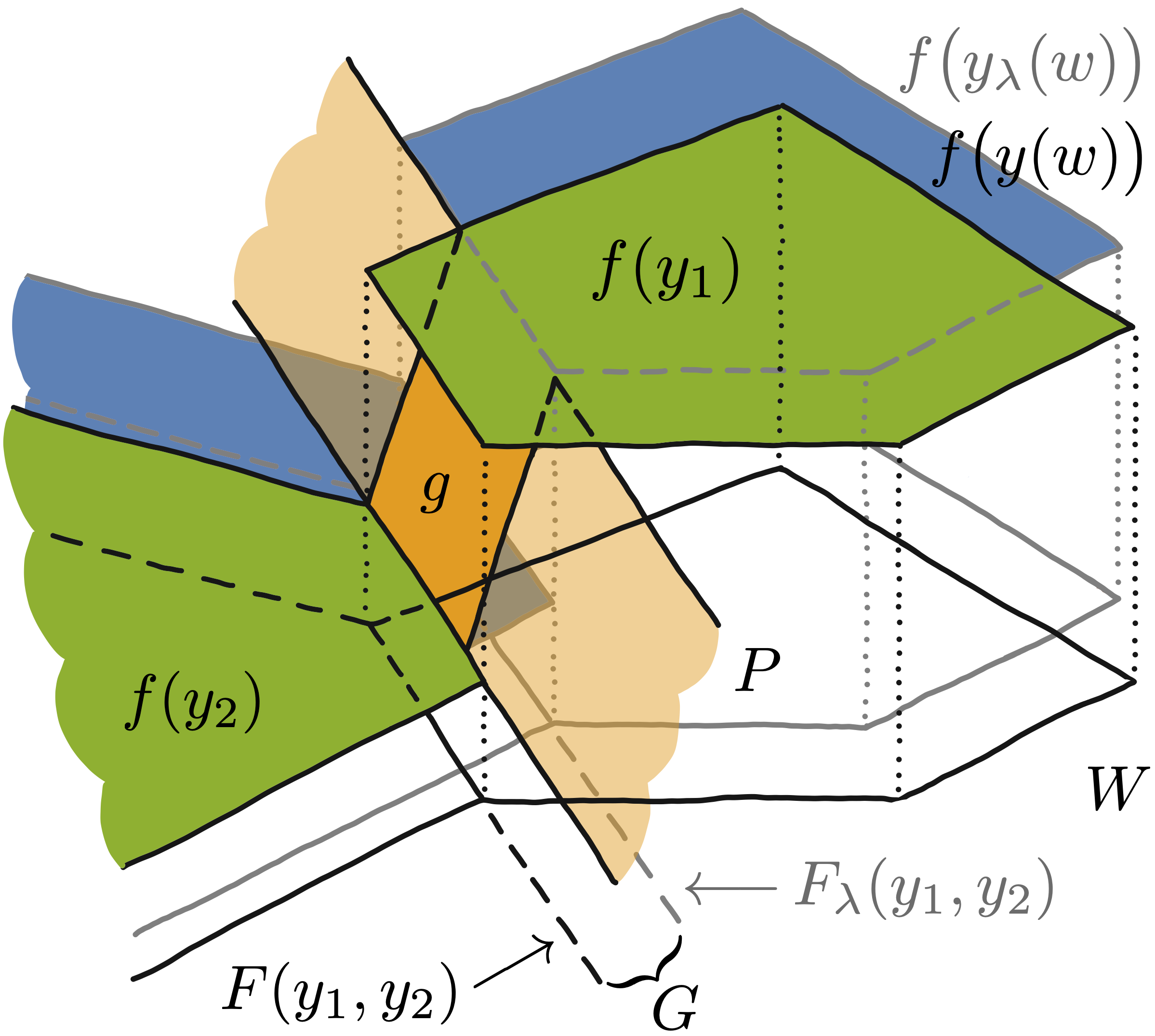}
		\caption{The interpolator $g$ attains the value $f(y_1)$ on a part of
		$F_\lambda(y_1,y_2)$ -- a border of the domain $G$.  The value $f(y_2)$ is
		attained on a part of $F(y_1,y_2)$ -- the second border of the strip $G$.}
		\label{fig:interpolator}
	\end{subfigure}
\caption{The polytopes $P_1$ and $P_1^\lambda$ and the interpolator $g$.}
\label{fig:A3proof}
\end{figure}

Consider $y_1$ first.  By Observation~\ref{O:facets}, there are
$z_1,\ldots,z_\ell\in Y$, such that facets of $P_1$ are parts of hyperplanes
$F(y_1,z_1),\ldots,F(y_1,z_\ell)$ in $W$.  Each of them separates $W$ into two
half-spaces, say $W_k^+$ and $W_k^-$, where $W_k^-$ is the half-space which
contains $P_1$ and $W_k^+$ is the other one. Let us denote
\begin{equation*}
	c_k(w) = \cc(w,y_1) - \cc(w,z_k)
		\quad\text{for $w\in W$ and $k=1,\dots,\ell$}.
\end{equation*}
Every $c_k$ is a non-zero linear function which is negative on $W_k^-$ and
positive on $W_k^+$.  By the definition of $y_1$, we have
\begin{equation*}
	\cc(w,y_1) + \lambda f(y_1)
		\le \cc(w,z_k) + \lambda f(z_k)
		\quad\text{for $w\in P_1$ and for $k=1,\ldots,\ell$},
\end{equation*}
that is
\begin{equation*}
	c_k(w) \le \lambda \bigl( f(z_k) - f(y_1) \bigr)
		\quad\text{for $w\in P_1$ and for $k=1,\ldots,\ell$}.
\end{equation*}
Now, denote
\begin{equation*}
	W_k^\lambda
		= \bigl\{w\in W: c_k(w)\le\lambda \bigl| f(z_k) - f(y_1) \bigr|\bigr\}
			\quad\text{for $k=1,\ldots,\ell$}.
\end{equation*}
Each $W_k^\lambda$ is a half-space in $W$ containing $W_k^-$ and hence $P_1$.
Let us set $P_1^\lambda = \bigcap_{k=1}^\ell W_k^\lambda$.  Clearly,
$P_1\subseteq P_1^\lambda$ (see \figref{fig:A3proof}).  By
Observation~\ref{O:distance}, the distance of the hyperplane $\bigl\{w\in W:
c_k(w)=\lambda \bigl| f(z_k) - f(y_1) \bigr|\bigr\}$ from $P_1$ is at most
$\lambda K$, where $K$ is given by \eqref{E:bound-K}.  Therefore, since all the
facets of $P_1^\lambda$ are at most $\lambda K$ far from $P_1$, there is a
constant $C$ such that each point of $P_1^\lambda$ is at most $C\lambda$ far
from $P_1$.

Finally, choose any $w_1\in P_1^\lambda\cap F_\lambda(y_1,y_2)$.  By
\eqref{E:h-attains-lambdas}, we have $g(w_1)=f(y_1)$, and by the definition of
$P_1^\lambda$, $w_1$ is no farther than $C\lambda$ away from $P_1$.

Now, let us treat $y_2$ and define the set $P_2^\lambda$ analogous to
$P_1^\lambda$, where each occurrence of $y_1$ is replaced by $y_2$.  Any
$w_2\in P_2^\lambda\cap F(y_1,y_2)$ has desired properties.  Indeed,
\eqref{E:h-attains-0} ensures that $g(w_2)=f(y_2)$ and $w_2$ is at most
$C\lambda$ far away from $P_2$.
\end{proof}

\subsection{Details of Experiments}\label{se:app:exp}

\subsubsection{Warcraft Shortest Path}

The maps for the dataset have been generated with a custom random generation
process by using 142 tiles from the Warcraft II tileset \citep{warcraft}. The
costs for the different terrain types range from $0.8$--$9.2$. Some example
maps of size $18\times 18$ are presented in \figref{fig:warcraft:examples}
together with a histogram of the shortest path lengths.  We used the first five
layers of ResNet18 followed by a max-pooling operation to extract the latent
costs for the vertices.

\begin{figure}[htb]
  \centering
  \begin{subfigure}[t]{0.68\linewidth}
		\centering
		\includegraphics[width=0.32\linewidth]{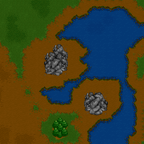}
		\includegraphics[width=0.32\linewidth]{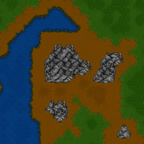}
		\includegraphics[width=0.32\linewidth]{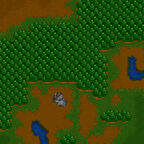}
		\caption{Three random example maps.}
		\label{fig:warcraft:examples}
	\end{subfigure}
	\hfil
  \begin{subfigure}[t]{0.30\linewidth}
		\centering
		\includegraphics[width=\linewidth]{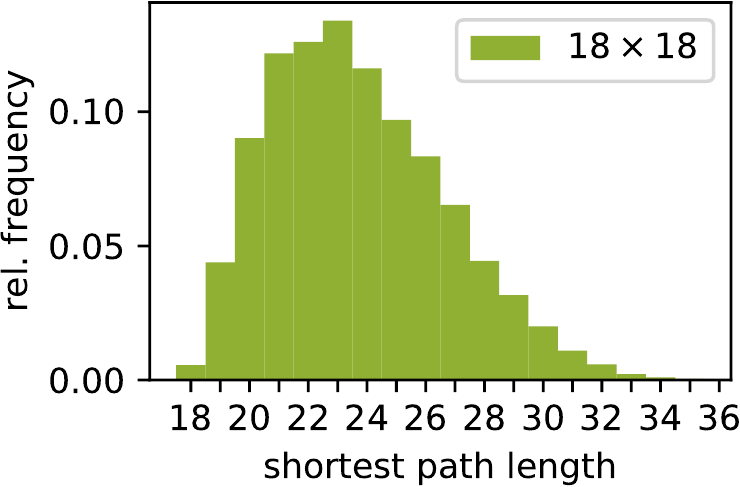}
		\caption{the shortest path distribution in the training set. All possible
		path lengths (18-35) occur.}
		\label{fig:warcraft:histogram}
	\end{subfigure}
  \caption{Warcraft {\sc SP}$(18)$ dataset.}
  \label{fig:warcraft}
\end{figure}

Optimization was carried out via Adam optimizer \citep{Adam} with scheduled
learning rate drops dividing the learning rate by $10$ at epochs $30$ and $40$.
Hyperparameters and model details are listed in \tbref{tab:warcraft:hyper}

\begin{table*}[htb]
  \centering
  \caption{Experimental setup for Warcraft Shortest Path.}
  \label{tab:warcraft:hyper}
	\begin{tabular}{@{\ }cccccc@{\ }}
    \textbf{k}
		& \textbf{Optimizer(LR)}
		& \textbf{Architecture} 
		& \textbf{Epochs}
		& \textbf{Batch Size}
		& $\mathbf{\lambda}$
    \\\hline
    12, 18, 24, 30 &  Adam($5 \times 10^{-4}$) & subset of ResNet18 & $50$ & $70$  & $20$
  \end{tabular}
\end{table*}

\subsubsection{MNIST Min-cost Perfect Matching}

The dataset consists of randomly generated grids of MNIST digits that are
sampled from a subset of 1000 digits of the full MNIST dataset.  We trained a
fully convolutional neural network with two convolutional layers followed by a
max-pooling operation that outputs a $k \times k$ grid of vertex costs for each
example.  The vertex costs are transformed into the edge costs via the known
cost function and the edge costs are then the inputs to the Blossom V solver
\citep{blossom-edmonds} as implemented in \citep{Kolmogorov2009}.

Regarding the optimization procedure, we employed the Adam optimizer along with
scheduled learning rate drops dividing the learning rate by $10$ at epochs $10$
and $20$, respectively. Other training details are in \tbref{app:tab:pm}. Lower
batch sizes were used to reduce GPU memory requirements.

\begin{table*}[ht]
  \centering
  \caption{Experimental setup for MNIST Min-cost Perfect Matching.}
  \label{app:tab:pm}
	\begin{tabular}{@{\ }cccccc@{\ }}
    \textbf{k}
		& \textbf{Optimizer(LR)}
		& \makecell{\textbf{Architecture}\\
								$[$channels, kernel size, stride$]$}
		& \textbf{Epochs}
		& \textbf{Batch Size}
		& $\mathbf{\lambda}$
		\\\hline
    4, 8&  Adam($10^{-3}$) & $[[20,5,1], [20,5,1]]$ & $30$ & $70$  & $10$\\
    16  &  Adam($10^{-3}$) & $[[50,5,1], [50,5,1]]$ & $30$ & $40$  & $10$ \\
    24  &  Adam($10^{-3}$) & $[[50,5,1], [50,5,1]]$ & $30$ & $30$  & $10$
  \end{tabular}
\end{table*}

\subsubsection{Globe Traveling Salesman Problem}

For the Globe Traveling Salesman Problem we used a convolutional neural network
architecture of three convolutional layers and two fully connected layers.  The
last layer outputs a vector of dimension $3k$ containing the $k$ 3-dimensional
representations of the respective countries' capital cities.  These
representations are projected onto the unit sphere and the matrix of pairwise
distances is fed to the TSP solver.

The high combinatorial complexity of TSP has negative effects on the loss
landscape and results in many local minima and high sensitivity to random
restarts. For reducing sensitivity to restarts, we set Adam parameters to
$\beta_1=0.5$ (as it is done for example in GAN training
\citep{radford2015unsupervised}) and $\epsilon=10^{-3}$.

The local minima correspond to solving planar TSP as opposed to spherical TSP.
For example, if all cities are positioned to almost identical locations, the
network can still make progress but it will never have the incentive to spread
the cities apart in order to reach the global minimum. To mitigate that, we
introduce a repellent force between epochs 15 and 30. In particular, we set
$$L_{\textrm{rep}} = \mathop{\mathbb{E}}_{i \neq j} e^{-\|x_i - x_j\|}$$ where
$x_i \in \R^3$ for $i = 1, \dots, k$ are the positions of the $k$ cities on the
unit sphere. The regularization constants $C_k$ were chosen as $2.0, 3.0, 6.0,$
and $20.0$ for $k \in \{5, 10, 20, 40\}$.

For fine-tuning we also introduce scheduled learning rate drops where we divide
the learning rate by $10$ at epochs $80$ and $90$.

\begin{table*}[ht]
  \centering
  \caption{Experimental setup for the Globe Traveling Salesman Problem.}
  \label{app:tab:tsp}
	\begin{tabular}{@{\ }cccccc@{\ }}
    \textbf{k}
		& \textbf{Optimizer(LR)}
		& \makecell{\textbf{Architecture}\\
								$[$channels, kernel size, stride$]$,\\
								linear layer size}
		& \textbf{Epochs}
		& \textbf{Batch Size}
		& $\mathbf{\lambda}$
		\\\hline
    5, 10, 20 &  Adam($10^{-4}$) & $[[20,4,2], [50,4,2], 500]$ & $100$ & $50$  & $20$  \\
    40 &  Adam($5 \times 10^{-5}$) & $[[20,4,2], [50,4,2], 500]$ & $100$ & $50$  & $20$
  \end{tabular}
\end{table*}

In \figref{fig:tsp:oceania}, we compare the true city locations with the ones
learned by the hybrid architecture. Due to symmetries of the sphere, the
architecture can embed the cities in any rotated or flipped fashion. We resolve
this by computing ``the most favorable'' isometric transformation of the
suggested locations. In particular, we solve the orthogonal Procrustes problem
\citep{oro2736}
\begin{equation*}
	R^* = \argmin_{R: R^TR = I} \|RX - Y\|^2
\end{equation*}
where $X$ are the suggested locations, $Y$ the true locations, and $R^*$ the
optimal transformation to apply. We report the resulting offsets in kilometers
in \tbref{app:tab:tsp_gps}.

\begin{table*}[ht]
  \centering
  \caption{Average errors of city placement on the Earth.}
  \label{app:tab:tsp_gps}
	\begin{tabular}{@{\ }ccccc@{\ }}
    k & 5 & 10 & 20 & 40 \\
    \hline
    Location offset (km)& $69 \pm 11$  & $19 \pm 5$ & $11 \pm 5$ & $58 \pm 7$
  \end{tabular}
\end{table*}

\subsection{Traveling Salesman with an Approximate Solver}

Since approximate solvers often appear in practice where the combinatorial
instances are too large to be solved exactly in reasonable time, we test our
method also in this setup.  In particular, we use the approximate solver
(OR-Tools~\citep{ortools}) for the Globe TSP.  We draw two conclusions from the
numbers presented below in \tbref{app:tab:tsp:approx}.
\begin{enumerate}
\item The choice of the solver matters. Even if OR-Tools is fed with the ground
truth representations (i.e. true locations) it does not achieve perfect results
on the test set (see the right column). We expect, that also in practical
applications, running a suboptimal solver (e.g. a differentiable relaxation)
substantially reduces the maximum attainable performance.
\item The suboptimality of the solver didn't harm the feature extraction -- the
point of our method. Indeed, the learned locations yield performance that is
close to the upper limit of what the solver allows (compare the middle and the
right column).
\end{enumerate}

\begin{table*}[ht]
  \centering
	\caption{{\bf Perfect path accuracy} for Globe TSP using the approximate solver
	OR-Tools~\citep{ortools}.  The maximal achievable performance is in the
	right column, where the solver uses the ground truth city locations.}
	\label{app:tab:tsp:approx}
	\begin{tabular}{@{\ }cccc@{\ }}
			& \multicolumn{2}{c}{Embedding OR-tools}
			& OR-tools on GT locations
			\\
			$k$ & Train \% & Test \% & Test \% \\
			\hline
			5  &  $99.8\std{0.0}$& $99.3\std{0.1}$ &$100.0$\\
			10 &  $84.3\std{0.2}$& $84.4\std{0.2}$ &$88.6$ \\
			20 &  $49.2\std{0.2}$& $48.6\std{0.8}$ &$54.4$ \\
			40 &  $14.6\std{0.1}$& $15.1\std{0.3}$ &$15.2$ \\ 
  \end{tabular}
\end{table*}

\end{document}